\newcommand{\remove}[1]{}
\newcommand{\Nat}{\mathbb{I\!\!N}}
\newcommand{\bond}{\!-\!}
\newcommand{\connected}{\mathsf{con}}
\newcommand{\es}{\emptyset}
\newcommand{\state}[2]{\langle {#1}, {#2}\rangle}
\newcommand{\trans}[1]{\ensuremath{\stackrel{#1}{\longrightarrow}}}
\newcommand{\guard}[1]{\mathsf{pre}(#1)}
\newcommand{\effects}[1]{\mathsf{post}(#1)}
\newcommand{\effect}[1]{\mathsf{effect}(#1)}
\newcommand{\RPN}{reversing Petri net }
\newcommand{\btrans}[1]{\ensuremath{\stackrel{#1}{\rightsquigarrow}_{b}}}
\newcommand{\ctrans}[1]{\ensuremath{\stackrel{#1}{\rightsquigarrow}_{c}}}
\newcommand{\proofend}{\hspace*{\fill} $\Box$}
\begin{document}

\setcounter{page}{273}
\publyear{2021}
\papernumber{2099}
\volume{184}
\issue{4}

  \finalVersionForARXIV

\title{Acyclic and Cyclic Reversing Computations\\in Petri Nets}

\author{Kamila Barylska, Anna Gogoli{\'n}ska\thanks{Address for correspondence:  Faculty of Mathematics
                 and Computer Science, Nicolaus Copernicus University, Toru\'n, Poland. \newline \newline
          \vspace*{-6mm}{\scriptsize{Received August 2021; \ accepted February 2022.}}}
\\
Faculty of Mathematics and Computer Science \\
Nicolaus Copernicus University\\
 Toru\'n, Poland \\
\{kamila.barylska,  anna.gogolinska\}@mat.umk.pl
}

\maketitle

\runninghead{K. Barylska and A. Gogoli{\'n}ska}{Acyclic and Cyclic Reversing Computations in Petri Nets}

\begin{abstract}
Reversible computations constitute an unconventional form of computing where any sequence of
performed operations can be undone by executing in reverse order at any point during a~computation.
It has  been attracting increasing attention as it provides opportunities for
low-power computation, being at the same time essential or
eligible in various applications.
In recent work, we have proposed a structural way of translating Reversing Petri
Nets (RPNs) --  a~type of Petri nets that embeds reversible computation, to bounded Coloured Petri Nets (CPNs)  -- an~extension of traditional Petri Nets, where tokens carry data values.
Three reversing semantics are possible in RPNs: backtracking (reversing of the
lately executed action), causal reversing (action can be reversed only when all its effects have been undone) and out of causal reversing (any previously performed action can be reversed).
In this paper,
we extend the RPN to CPN translation with formal proofs of correctness.
Moreover, the possibility of introduction of cycles to RPNs is discussed.
We analyze which type of cycles could be allowed in RPNs to ensure consistency with the current semantics.
It emerged that the most interesting case related to cycles in RPNs occurs in causal semantics, where various interpretations of
dependency result in different net's behaviour during reversing. Three
definitions of dependence are presented and discussed.
\end{abstract}

\label{sec.int}
\section{Introduction}
\label{sec:RPN}
The classical concept of reversibility in Petri nets is most commonly considered as the ability of a~system to achieve its initial state at any time of any computation (i.e., the initial state is a "home state" \cite{EN}). This property is sometimes also called cyclicity~\cite{A}.
The decades-long research in this area was globally oriented, i.e., it concerned the entire system, not its individual actions \cite{AK,BDE}. On the other hand, in many fields of science, the concept of reversibility is defined for individual system's transitions as the ability to reverse an action, undo its execution, or perform an action "backward" with exactly the same ease as "forward".

Reversible computations are essential in many fields, for example in large parallel simulations~\cite{J} or databases transactions, where withdrawals of some operations are frequently required, like in loss of internet connection during online payments. Reversible computations are also vital part of version control systems, which are widely used in software developing and other disciplines. The whole idea of version control systems is shifting between former and latter versions, hence adding and reversing commits. Other field which attract much interest in reversing computations is
biology. Catalytic reactions can be seen as reversible processes, where binding between the catalyst and the first substrate is reversed after the reaction. Other biological phenomena, where reversing is observed, are for example activation cycle of G-proteins or oxygen transfer by hemoglobin tetramer.

In recent years, substantial work has been underway to study the concept of reversibility in Petri nets in a local context, i.e., focusing attention on a single action and the possibility of undoing it, not on the entire system. The first attempts were to inverse a single system action by adding a strict reverse to it (the same transition, but in opposite direction). The problem of checking whether the set of such obtained reachable markings changes is proven to be undecidable (for unbounded nets), while for coverable markings - decidable. Additionally, it was shown that the set of markings reachable by the system may change after the introduction of just one single strict reverse~\cite{B}. The attention was therefore directed not only on the strict reverses, but also on actions that have exactly the same effect as the reverse (i.e., having isomorphic behaviour - in the meaning of reachability graph)~\cite{C,D,F}. Another area of research involved action reversal in step semantics with auto-concurrency~\cite{E}. Research attention was also given to Petri nets with causal-consistent local reversibility, obtained by unfolding any place-transition net into occurrence nets and folding them back to a coloured Petri net with an infinite number of colours~\cite{G}.
Apart from adding the functionality of reversing (by creating strict or behavioural reverses) to systems modelled with Petri nets, a new model was proposed, namely reversing Petri nets (RPNs)~\cite{PP}. In the newly introduced model, three (motivated by real concurrent systems) computational semantics were considered, namely: backtracking, causal reversing and out-of-causal-order reversing. It was also shown how to encode reversing Petri nets into coloured Petri nets with a finite number of colours, equivalent to the classical bounded place-transition \linebreak systems~\cite{BGMPPP}.

This paper has two \textbf{goals}. The first is to extend the results presented in \cite{BGMPPP}. The new type of history is added
and formal proofs of generation of CPNs from RPNs are presented. We also test the generation on a~number of examples,
where the CPN Tools \cite{CPNtools} have been employed to illustrate that the translations
conform to the semantics of reversible computation.
The second goal is to discuss possibility of the introduction of cycles
to RPNs and how it would impact the reversibility.

\eject
\textbf{Paper organization.} In the following two sections we give an overview
of reversing Petri nets (RPNs) and Coloured Petri
nets (CPNs). Section~\ref{sec:trans_bt} contains description of the generation of CPN based on RPN. This is carried out in two
steps: first CPN mimicking RPN behavior in
forward execution of transitions is obtained, then possibility of reversing is
added to the CPN. Section~\ref{sec.cycles} focuses on introduction
of cycles to RPNs, whether it is possible and how it would impact the
reversing of transitions. The paper is concluded in
Section~\ref{sec.cons}.

\section{Reversing Petri nets}
\label{sec.RPN}

In this section we present the basic concepts of reversing Petri nets (RPNs) based on~\cite{BGMPPP,PP}. In general, the idea of RPNs is to allow reversing computations as easily as the forward ones. Computations in this context mean firing of transitions. Following the biological inspiration (for example catalytic reactions), tokens in RPNs are persistent and distinguishable, and one may associate them with atoms or chemical molecules. The role of transitions is to create bonds between tokens (similar to chemical bonds) or to simply transport them. Reversing of transitions is equivalent to breaking of bonds. Hence, RPNs are naturally suitable to model biological reactions. However, in general, tokens may represent any objects, and bonds any interactions between those objects. An example of RPN is presented in Figure~\ref{figure1}.
\\
\\
\textbf{Preliminaries}\\ \\
The set of non-negative integers is denoted by $\Nat$.
Given a set X, the cardinality (number of elements) of $X$ is denoted by $\#X$, the powerset (set of all subsets) by~$2^X$ -- the cardinality of the powerset is~$2^{\#X}$.
\begin{definition}\label{def:RPN}{\rm
 A \emph{\RPN (RPN)} is a tuple $(P,T,F,A,B)$ where:
\begin{enumerate}
\item $P$ and $T$ are finite sets of \emph{places} and \emph{transitions}, respectively.
\item $A$ is a~finite set of \emph{bases} or \emph{tokens}.  The set $\overline{A} = \{\overline{a}\mid a\in A\}$
contains a ``negative'' instance for every element in $A$\footnote{Elements of $A$ emblem the presence of the base, when elements of $\overline{A}$
the absence of the it. Utilising of the concept can be found in Definition~\ref{def:forward-enabled}.}.
\item $B\subseteq \{\{a,b\}\mid a\neq b\in A\}$ is a set of \emph{bonds}.
We use the notation $a \bond b$ for a bond $\{a,b\}\in B$. The set
$\overline{B} = \{\overline{\beta}\mid \beta\in B\}$ contains a~``negative'' instance for each bond in $B$, similarly as for bases.
\item $F : (P\times T  \cup T \times P)\rightarrow 2^{A\cup\overline{A}\cup B\cup\overline{B}}$ is a set of directed arcs
associated with a subset of $A\cup\overline{A}\cup B\cup\overline{B}$.
\end{enumerate}
}\end{definition}

In the above definition the sets of \emph{places} and \emph{transitions} are understood in the standard way
(see~\cite{PN}).

For a transition $t\in T$ we introduce ${}^\bullet t=\{p\in P\mid F(p,t)\neq\es\}$, $t{}^\bullet=\{p\in P\mid F(t,p)\neq\es\}$
(sets of input and output places of $t$), and
$\guard{t}=\bigcup_{p\in P}F(p,t)$, $\effects{t}=\bigcup_{p\in P}F(t,p)$ (unions of
labels of the incoming/outgoing arcs of $t$),
as well as $\effect{t}=\effects{t} \setminus \guard{t}$. If $\{a,b\} \in B$ and $\{b,c\} \in B$, instead of $a\bond b, b\bond c$ we use the notation $a \bond b \bond c$ (and similar for more bonds).

The following restrictions give rise to the notion of well-formed RPNs.

\begin{definition}\label{def:well-formed-RPN}{\rm
A \RPN $(P,T,F,A,B)$ is \emph{well-formed}, if it satisfies the following conditions for all $t\in T$:
\begin{enumerate}
\item $A\cap\guard{t}=A\cap\effects{t}$,
\item if $a\bond b\in \guard{t}$ then $a\bond b \in \effects{t}$,
\item for every $t\in T$ we have: ${}^\bullet t\neq \es$ and $\#(t {}^\bullet) = 1$,
\item if $a,b\in F(p,t)$ and $\beta=a\bond b\in F(t,q)$ then either $\beta\in F(p,t)$, or $ \overline{\beta}\in F(p,t)$.
\end{enumerate}}
\end{definition}

Clause (1) indicates that transitions do not erase any tokens and clause (2) indicates
that transitions do not destroy bonds. In (3) forks are prohibited in order to avoid
 duplicating tokens that are transferred into different output places but are already bonded
 in the input places.
Finally, clause (4) indicates that tokens/bonds cannot be
recreated into more than one output place --
if a bond appears on the output of a transition, then either that bond have already existed and the transition only transports it (case $\beta\in F(p,t)$), or it is being created and we need to make sure that it has not existed before (case $ \overline{\beta}\in F(p,t)$).
All those clauses are inspired by biological reactions (for example number of atoms is substrates and
products has to be constant).

\medskip
A marking is a distribution of tokens and bonds across places,\\
$M:P\rightarrow 2^{A\cup B}$, where for $p\in P$ if $a\bond b\in M(p)$ then $a,b\in M(p)$.

\medskip
For now we focus only
on acyclic RPNs hence every transition can be executed only once.
However, due to future
assumptions (see Remark~\ref{remarkTwo} related to cycles),
we want to consider transitions in RPNs which could be fired twice.
Because of that, in the paper we would present
definitions and theorems where this fact is already taken into account.

Let $\Nat_2$ be a set containing the empty set, singletons or two-elements sets of natural numbers: i.e. $\Nat_2\subseteq 2^\Nat$ and $\forall_{X\in \Nat_2} \#(X)\leq 2$.
A~\emph{history} assigns an index to each transition occurrence, $H:T\rightarrow \Nat_2$.
An empty-set
history associated with a transition $t\in T$ means that $t$ has not been
executed yet or it has been reversed and not executed again, while a~history of  $\{k_i,k_j\}$
indicates that $t$ was executed as the $k_i^{th},k_j^{th}$ transition in the
computation (and not reversed until this moment).
$H_0$ denotes the initial history where $H_0(t)=\emptyset$ for every $t\in T$.
A \emph{state} is a pair $\state{M}{H}$ of a marking and a history.

Now we introduce the set $\connected(a,C)$ containing $a$ if $a$ is a part of $C$ and a set of tokens connected with $a$ via bonds which are in $C$
as follows

\begin{definition}{\rm
For $a\in A$ and $C\subseteq A\cup B$ we define the following set: \\
$\connected(a,C)=
(\{a\}\!\cap\! C)\cup \{b, c, \{b,c\} \!\mid\! \exists_{w \in 2^B} w=\langle \beta_1, \beta_2, \ldots, \beta_n\rangle, \beta_i \in C\cap B, \beta_i = \{a_{i-1},a_i\}, a_i\in C\cap A,  a_0=a, \beta_n=\{b, c\}, i \in 1,\ldots, n\}$.
}\end{definition}

During biological reactions and other processes, various types of reversing are possible.
In some cases, only the last operation can be reversed (\emph{backtracking}). In other instances, the action can be rollbacked if all its effects have been undone (\emph{causal reversing}), no matter when this action was performed. In the last category of reversing, any previously executed operation can be undone (\emph{out of causal reversing}).
All those three types of reversing are possible in RPNs - only the definition of enableness and mechanism of bonds breaking should be changed to switch between reversing categories.

Note that, in this paper we only focus on backtracking and causal reversing. More information about the third semantics one can find in~\cite{BGMPPP}.

\subsection{Reversing Petri nets - forward execution}\label{sec:semantics}
From now on we assume RPNs to be well-formed. Furthermore, as in~\cite{PP}, we assume that in the initial marking $M_0$ of RPN,
there exists exactly one base of each type, i.e., $\#\{p\in P\mid a\in M_0(p)\} = 1$, for all $a\in A$.
Now we can indicate the conditions that must be met for a~transition of a RPN to be enabled.

\begin{definition}\label{def:forward-enabled}{\rm
Consider a \RPN $(P,T,F,A,B)$, a transition $t\in T$, a state $\state{M}{H}$, a~base $a\in A$, and a~bond $\beta\in B$. We say that
 $t$ is \emph{(forward) enabled} in $\state{M}{H}$ if the following hold:
 \begin{enumerate}
\item  if $a\!\in\! F(p,t)$, resp. $\beta\!\in\! F(p,t)$, for $p\!\in\!{}^\bullet t$, then $a\!\in\! M(p)$, resp. $\beta\!\in\! M(p)$,
\item  if $\overline{a}\!\in\! F(p,t)$, resp. $\overline{\beta}\!\in\! F(p,t)$ for $p\!\in\! {}^\bullet t$, then $a \!\not\in\! M(p)$, resp. $\beta \!\not\in\! M(p)$,
\item if $\beta\!\in\! F(t,p)$ for $p\!\in\! t{}^\bullet$ and $\beta\!\in\! M(q)$ for $q\!\in\! {}^\bullet t$ then $\beta\!\in\! F(q,t)$.
 \end{enumerate}
}\end{definition}

A transition $t$ is enabled in a state $\state{M}{H}$ if all tokens from $F(p,t)$ for every
$p\in {}^\bullet t$ (i.e., tokens required for the firing of the transition) are available, and none
of the tokens whose absence is required exists in an input place of the transition
(clauses 1 and 2). Clause 3 indicates that if a pre-existing bond appears
in an outgoing arc of a transition then it is also a precondition for the transition to fire.

\begin{definition}\label{def:for-eff}{\rm
Given a \RPN $(P,T,F,A,B)$, a state $\state{M}{H}$, and a~transition $t$ enabled in
$\state{M}{H}$, we write $\state{M}{H}
\trans{t} \state{M'}{H'}$
where:
\[
\begin{array}{rcl}
	M'(p) & = & \left\{
	\begin{array}{ll}
		M(p)\setminus \bigcup_{a\in F(p,t)}\connected(a,M(p)),\hspace{0.5in}  & \textrm{if } p\in  {}^\bullet{t} \\
				M(p)\cup F(t,p)\cup \bigcup_{ a\in F(t,p), q\in{}^\bullet{t} }\connected(a,M(q)), & \textrm{if }  p\in t{}^\bullet\\
        	M(p), &\textrm{otherwise}
	\end{array}
	\right.
\end{array}
\]
and
$H'(t')=H(t')\cup \{\max\{k| k \in H(t''), t''\in T\} +1\}$, if $t'=t$,
and $H(t')$ otherwise.
}\end{definition}

After the execution of transition $t$, all suitable (according to Definition~\ref{def:for-eff}) tokens and bonds occurring in its incoming arcs together with elements connected to them by bonds are
transferred from the input places to the output place of $t$. Moreover, the history function
$H$ is changed by assigning the next available integer number to the transition. An example of forward execution of transitions
can be seen in Figure~\ref{figure1}.

\begin{figure}[ht]
\vspace*{1mm}
\begin{center}
\includegraphics[scale=0.53]{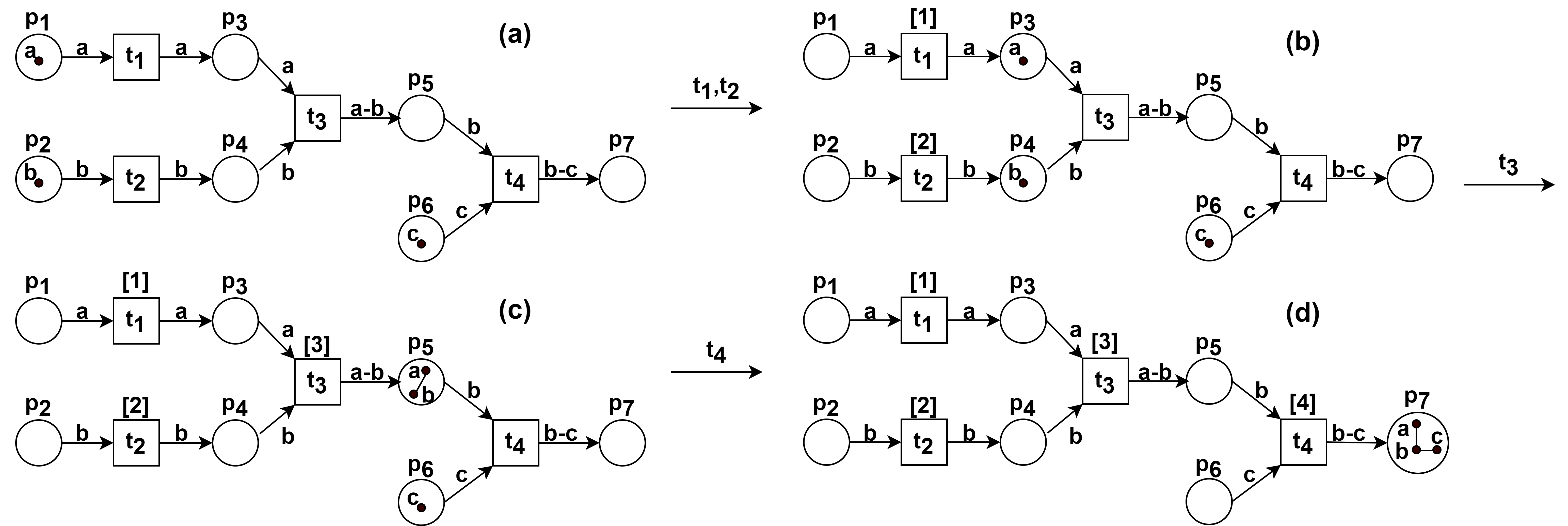}\vspace*{-1mm}
\caption{Example of RPN and its forward execution. Transitions $t_1$
and $t_2$ only transport tokens (token $a$ and $b$, respectively).
Transition $t_3$ requires token $a$ from place $p_3$ and token $b$ from place $p_4$ and it creates a bond between those tokens
($a\bond b $) and transport them to $p_5$. Transition $t_4$ requires token $c$ from $p_6$ and $b$ from $p_5$ -- which is present there after execution of $t_3$. The fact that $b$ is already connected with $a$ is irrelevant for $t_4$ -- it~transports them both together and creates a bond between $b$ and $c$. The whole \emph{molecule} ($a\bond b \bond c$) is transported to $p_7$. Transitions history
is presented as numbers above transitions.}\label{figure1}
\end{center}\vspace*{-3mm}
\end{figure}

In a natural way, we extend the notion of enabledness and transition execution to sequences of transitions:
\newpage
\begin{definition}{\rm Given a RPN  $(P,T,F,A,B)$ and a sequence of transitions
$\sigma = t_1 t_2 \ldots t_n$, where $t_i \in T$ ($i \in 1,\ldots, n$), we say that:
	\begin{itemize}
		\item sequence $\sigma$ is \emph{enabled} in state $\state{M}{H}$ if there exists a sequence of states
$\state{M_1}{H_1},
 \ldots, \state{M_n}{H_n}$ such that
$\state{M}{H} \trans{t_1} \state{M_1}{H_1} \trans{t_2} \ldots \trans{t_n}\state{M_n}{H_n}$,
		\item state $\state{M_n}{H_n}$ is called a \emph{resulting state}, and we write
$\state{M}{H}\trans{\sigma}
\state{M_n}{H_n}$,
	\item a state $\state{M_0}{H_0}$ where for all $t\in T$, $H_0(t)=\emptyset$ is called an \emph{initial state},
	\item a~state $\state{M}{H}$ is \emph{reachable} from the initial state $\state{M_0}{H_0}$
	if there exists a sequence $\sigma$, such that $\state{M_0}{H_0}\trans{\sigma} \state{M}{H}$.
\end{itemize}
}
\end{definition}

We now present the semantics for the various
forms of reversibility as proposed in~\cite{PP}.

\subsection{Backtracking}\label{ssec:backtracking}

A transition is backward enabled (\emph{$bt$-enabled}) if the following holds:

\begin{definition}\label{def:bt-enabled}{\rm
Consider a \RPN $(P,T, F, A, B)$ a state $\state{M}{H}$ and a transition $t\in T$. We say that $t$ is \emph{$bt$-enabled} in
$\state{M}{H}$ if
$k\in H(t)$ with $k\geq k'$ for all $k' \in H(t')$, $t'\in T$.
}\end{definition}

Thus, only the last executed transition can be backward executed in this semantics.
The effect of backtracking a transition in a \RPN is as follows:

\begin{definition}\label{def:bt-eff}{\rm
Given a RPN $N=(P,T,F,A,B)$, a state $\langle M, H\rangle$, and a~transition $t$  that is $bt$-enabled
in $\state{M}{H}$, we write $ \state{M}{H}
\btrans{t} \state{M'}{H'}$
where:
\[
\begin{array}{rcl}
	M'(p) & = & \left\{
	\begin{array}{ll}
		M(p)\cup\bigcup_{a\in F(p,t)\cap F(t,q)}\connected(a,M(q) \setminus \effect{t}),\hspace{0.1in} & \textrm{if } p\in {}^\bullet{t} \\
		M(p)      \setminus \bigcup_{a\in F(t,p)}\connected(a,M(p)), 									& \textrm{if }  p\in t{}^\bullet\\
             M(p), 																				& \textrm{otherwise}
	\end{array}
	\right.
\end{array}
\]
$H'(t')  =  H(t') \setminus \{\max\{k\mid k\in H(t')\}\}$,  	 if $t'=t$,
$H'(t')$, otherwise
}\end{definition}

The crucial element in the reversing is to establish a set of tokens in a given place $p$
 that are connected by bonds to a token $a$ in marking $M$ - this element is described as $\mathsf{con(a, M(p))}$.
When transition $t$ is reversed (in \emph{backtracking} semantic)
we add to its input places elements (tokens and bonds between them) obtained after undoing the effect of $t$, but only those elements which are included in the description of the arc between the input place and  transition $t$ (the first line in the definition).
For the output places of $t$ we remove element (tokens and bonds between them) containing token, which was put there by that transition.
Moreover the history function $H$ of $t$ has to be changed by removing the maximal element of the set to capture that the transition has been reversed.

\begin{example}
In part (d) of Figure~\ref{figure1}, if we decide to reverse $t_4$, a bond between $b$ and $c$ would be broken - because it is an effect of $t_4$. Token $c$ would go back to $p_6$ and element $a\bond b$ to $p_5$ -- it would lead to the marking presented in part $(c)$. Similar situation would occur during reversing of $t_3$ from the marking depicted in part (c). Transitions $t_2$ and $t_1$ have only one input and output place each, hence their reversing would result in transferring token $b$ or $a$ (respectively) from their output to input places.
\end{example}

\subsection{Causal reversing}\label{ssec:causal}

In causal reversing semantic, transition $t\in T$ can be reversed if all transitions dependent on $t$, and
executed after $t$, have been reversed. Therefore, causal enabledness is defined  as follows.

\begin{definition}\label{co-enabled}{\rm
Consider a \RPN $( P,T,F,A,B)$ and a state $\state{M}{H}$. Transition $t$ is
$co$-enabled in  $\state{M}{H}$ if
$H(t)\neq \es$ and for all $t'$ that are \emph{dependent}
on $t$ then either $H(t')=\es$ or $\max(H(t))\geq
\max(H(t'))$.
}\end{definition}

Notice, that behavior of RPN in causal semantics would be determined by the definition of dependence. This will be discussed more in the second part of the paper (Section~\ref{sec.cycles}). So far, we would focus on \emph{the classical} definition:

\begin{definition}\label{structural}{\rm
		Let $(P, T, F, A, B)$ be RPN, and $t_1, t_2\in T$.
		Transitions $t_1, t_2$ are \emph{(structurally) dependent} (we use the notation: $(t_1, t_2) \in Dep$)
		if an input place of one of them is an output place of the other:
		$(t_1, t_2) \in Dep \Rightarrow (t_1^{\bullet} \cap {}^{\bullet}t_2 \neq \emptyset)
		\ \lor \ ({}^{\bullet}t_1 \cap t_2^{\bullet} \neq \emptyset) $.}\footnote{Note that the definition clearly determines the symmetry
		of the relation, i.e.,~$(t_1, t_2) \in Dep \iff (t_2, t_1) \in Dep$.}
	\label{d:dep}
\end{definition}

The effect of causally reversing of transition in \RPN is as follows:

\begin{definition}\label{def:co-eff}{\rm
Given a RPN $N=(P,T,F,A,B)$, a state $\langle M, H\rangle$, and a~transition $t$ with history $k$
$co$-enabled in $\state{M}{H}$, we write $ \state{M}{H}
\ctrans{t} \state{M'}{H'}$
for $M'$ as in Definition~\ref{def:bt-eff} and \\
\[
\begin{array}{rcl}
	H(t') & = & \left\{
	\begin{array}{ll}
		H(t') - \{\max\{k\mid k\in H(t')\}\},\hspace{0.1in} & \textrm{if } t'=t \\
		\{k'  \mid  k' \in H(t'), k' < k\} \cup \{k' - 1 \mid k' \in H(t'), k' > k \},
		& \textrm{otherwise}
	\end{array}
	\right.
\end{array}
\]
}\end{definition}

In many cases reversing according to the backtracking and causal semantic
would be the same.
\begin{example}
In Figure~\ref{figure1} part $(d)$ in both semantics
only transition $t_4$ could be reversed. It would lead to the state presented in part~$(c)$ of the figure. Then, once again, only one transition could be reversed - transition $t_3$ and it would lead to the marking presented in part~$(b)$. At this point we can see the difference between both semantics. In backtracking, transition $t_2$ has to be reversed before transition $t_1$, because their were fired in that (opposite) order. For causal reversing,
both transitions could be reversed, because they are not dependent. Hence, transition $t_1$ could be reversed before $t_2$, even if in forward execution $t_1$ was fired before $t_2$.
\end{example}

\subsection{Returning to the initial state}

The following theorem states that starting from the initial marking and executing a sequence of transitions we
may return back (using backtracking or causal reversing semantics) to the initial marking if all the forward transitions
are reversed. Let $\xmapsto{\sigma}$ denotes
  a sequence of both forward and reversed transitions.
Moreover, for a sequence $\sigma\in (T \cup \{\underline{t} \mid t \in T \})^*$, the occurrence of $t$, written without underlining, means that
transition $t\in T$ was executed in the forward direction in $\sigma$, while the occurrence of~$\underline{t}$, underlined, indicates
that transition $t\in T$ was executed in the reverse direction.

\begin{theorem}\label{theorem1}{\rm
	If $\state{M}{H}\xmapsto{\sigma}\state{M'}{H'}$ where $\{t|t\in \sigma\}=\{t|\underline{t} \in \sigma \}$ then $M=M'$ and $H=H'$.
}\end{theorem}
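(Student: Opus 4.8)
The plan is to proceed by induction on the length of the mixed sequence $\sigma$, but the statement as phrased is not directly amenable to a naive induction since an intermediate prefix of $\sigma$ need not satisfy the balance condition $\{t \mid t \in \sigma\} = \{t \mid \underline{t} \in \sigma\}$. So first I would isolate the right inductive invariant. The natural candidate is: for any mixed sequence $\sigma$ with $\state{M}{H}\xmapsto{\sigma}\state{M'}{H'}$, if we let $n_t$ denote the number of forward occurrences of $t$ in $\sigma$ minus the number of reversed occurrences of $\underline{t}$, then $H'(t)$ is obtained from $H(t)$ in a controlled way and $M'$ is determined by $M$ together with the ``net effect'' of the transitions. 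Since the theorem only needs the case where every $n_t = 0$, I would actually aim for something lighter: I would show that a forward step immediately followed by the reverse of the \emph{same} transition (under either $bt$- or $co$-semantics) returns the state exactly, and then argue that any balanced mixed run can be reorganized into such cancelling adjacent pairs.

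The key steps, in order: (1) \emph{Single-step cancellation.} Suppose $\state{M}{H}\trans{t}\state{M_1}{H_1}$. Show that $t$ is $bt$-enabled (hence also $co$-enabled) in $\state{M_1}{H_1}$, using Definition~\ref{def:for-eff}: the new history index $\max\{k \mid k \in H(t''), t'' \in T\}+1$ assigned to $t$ is strictly maximal, which is exactly the $bt$-enabledness condition, and for $co$-enabledness any dependent $t'$ has $H_1(t') = H(t')$ with $\max H(t') \le \max H_1(t)$. Then verify $\state{M_1}{H_1}\btrans{t}\state{M}{H}$ (resp. $\ctrans{t}$) by direct computation: the marking clause in Definition~\ref{def:bt-eff} on output places removes exactly $F(t,p)$ together with the connected components dragged in by Definition~\ref{def:for-eff}, restoring $M(p)$; on input places it re-adds precisely $\bigcup_{a \in F(p,t) \cap F(t,q)} \connected(a, M_1(q) \setminus \effect{t})$, which by well-formedness (clauses 1,2,4 of Definition~\ref{def:well-formed-RPN}) coincides with $\bigcup_{a \in F(p,t)}\connected(a,M(p))$, the set removed in the forward step; and the history clauses plainly invert each other since we added then removed the same maximal index. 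This step uses the crucial fact that $\effect{t} = \effects{t}\setminus\guard{t}$ is exactly the set of freshly created bonds, so $M_1(q)\setminus\effect{t}$ strips the bonds that $t$ made and leaves the pre-existing connectivity intact.

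Step (2): \emph{Reduction of a general balanced run to cancelling pairs.} Given a balanced mixed $\sigma$, I would locate a forward occurrence of some $t$ that is \emph{immediately} followed, possibly after commuting past independent intervening events, by its matching $\underline{t}$ — or more robustly, argue by induction on $|\sigma|$: take the last reversed transition $\underline{t}$ occurring in $\sigma$; since the run ends and the net is acyclic (each transition fires at most once forward in the relevant regime) the forward $t$ preceding it has, at the moment $\underline{t}$ is taken, the maximal history index among all currently-executed transitions, so the segment between them consists only of transitions that were reversed and re-... — here one must be careful. The cleaner route: induct on $|\sigma|$; if $\sigma$ is empty we are done; otherwise since $\sigma$ is balanced it contains at least one reversal, and I claim there is an index where $t$ is followed directly by $\underline t$ with nothing of $t$ in between. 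Taking the \emph{first} reversal $\underline t$ in $\sigma$, everything before it is forward; by acyclicity $t$ appears exactly once before it; the transitions fired after that forward $t$ and before $\underline t$ all have larger indices, and for $\underline t$ to be enabled ($bt$ or $co$) all of them dependent on $t$ must already be reversed — contradiction with $\underline t$ being the first reversal, unless there are no such dependent transitions, in which case those intervening transitions commute with $t$ and $\underline t$, letting us slide $\underline t$ next to $t$. Then apply Step (1) to delete the adjacent pair $t\,\underline t$, obtaining a shorter balanced run with the same endpoints, and invoke the induction hypothesis.

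\textbf{Main obstacle.} The delicate point is Step (2): justifying that the intervening events between a forward $t$ and its first matching reversal $\underline t$ can be commuted away. This needs a diamond/commutation lemma for independent transitions in the mixed calculus — forward–forward, forward–reverse, and reverse–reverse — which the excerpt does not state explicitly, so I would either prove the needed special case inline (independence here being exactly $(t,t')\notin Dep$, so their input/output places are disjoint and the marking updates and history shifts act on disjoint data) or, more economically, avoid commutation altogether by strengthening the induction to track all transitions simultaneously: prove by induction on $|\sigma|$ that $H'$ is determined by $H$ and the multiset of net occurrences, and that $M'$ is determined by $M$ and $H' $ relative to $H$, so that balance forces $H' = H$ and then forces $M' = M$. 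That global bookkeeping is more calculation but sidesteps the commutation argument; I expect the commutation route to be shorter to write but the global route to be safer, and I would present the commutation route with the independence lemma proved as a small auxiliary claim.
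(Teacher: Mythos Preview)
Your proposal is correct and follows essentially the same route as the paper: induction on $|\sigma|$, locate the first reversal $\underline{t}$, observe that in backtracking the preceding event must be $t$ itself while in causal semantics the intervening forward transitions are all independent of $t$ (else $\underline{t}$ could not be $co$-enabled), commute $t$ forward past them to form an adjacent pair $t\,\underline{t}$, cancel it, and recurse. The paper's proof is terser and simply asserts both the cancellation in your Step~(1) and the commutation in your Step~(2) without the auxiliary lemmas you correctly flag as needed; your identification of the commutation/diamond property as the real work is exactly where the paper waves its hands.
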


\paragraph{Proof of Theorem~\ref{theorem1}:}
Suppose that $\state{M}{H}\xmapsto{\sigma}\state{M'}{H'}$ where $\sigma \in T^*$ is a sequence of forward
and reverse transitions such that $\{t|t\in \sigma\}=\{t|{\underline{t} \in \sigma}\}$.
We prove the theorem by induction on the length $n$ of $\sigma$ ($n=|\sigma|)$.
If $n=0$, there are no transitions in $\sigma$ and the theorem is trivially satisfied. If not, we assume that the theorem holds
for $k<n$ and proceed by induction.
Let $t$ be the first transition in the sequence executed in the reverse direction.
We distinguish two cases:
\begin{enumerate}
	\item If the pair of transitions $t \underline{t}$ constitutes a factor of the sequence $\sigma$,
	then we can remove $t\underline{t}$ from $\sigma$. This operation is correct because reversing $t$ just
	after its execution undoes the effect of $t$ and leads to the marking before execution of $t$.
	This way we obtain a shorter sequence $\sigma'$,
	which is equivalent to the former one (in the meaning of reachable markings).
	Since $|\sigma'|<|\sigma|$
	the proof follows by induction.
	Note that this part holds both for backtracking and co-reversing.
	\item If the pair of transitions $t'\underline{t}$ (for $t'\neq t$) constitutes a factor of the sequence $\tau$,
	then it means that for $\underline{t}$ to be executed (strictly speaking: for $t$ to be reversed)
	it must be that $t$ has been executed before $t'$.
	Note that in backtracking semantics, this situation is impossible, as reversing can only happen immediately after the execution of transition $t$,
	hence this part is crucial only for causal-order reversing semantics.
	By~Definition~\ref{co-enabled}, all transitions located in the sequence $\sigma$ between $t$ and $\underline{t}$ are\
	 independent of $t$ (if not, it would not be possible for $t$ to be reversed before their reversal and we assume that
	 $t$ is the first occurrence of a reverse transition).
	As a result $t$ can be swapped with all of them, resulting in a new equivalent sequence containing $t \underline{t}$.
	In this situation, the previous case applies.
\end{enumerate}
This completes the proof.
\proofend

\section{Coloured Petri nets}
\label{sec:CPN}
Recall that RPNs constitute a model in which transitions can be reversed
according to three semantics: backtracking, causal, and out-of-causal-order
reversing. A main characteristic of RPNs is the concept of a \emph{history}, which assigns
a set of natural numbers to transitions.
However, it imposes
the need of a global control in order to reverse computations. Our goal is
to recast the model of RPNs into one without any form of global control
while establishing the expressiveness relation
between RPNs and the model of bounded coloured Petri nets.
In this section we recall the notion of coloured Petri nets (CPNs).

Note that, according to the utilised CPN Tools \cite{cpn-tools},
$\mathit{EXPR}_V$ is the set of \emph{net inscriptions} (over a~set of variables $V$, possibly empty,
i.e., using only constant values) provided by CPN ML.
Moreover, by $\mathit{Type}[e]$ we denote the type of values obtained by the evaluation
of expression $e$.
The set of \emph{free variables} in an expression $e$ is denoted by $Var[e]$.
The setting of a particular value to free variable $v$ is called~a \emph{binding}
$b(v)$. We~require that $b(v)\in \mathit{Type}[v]$
and denote with the use of $\langle\rangle$ filled by the list of valuations and
written next to the element to whom it relates.
The set of bindings of $t$ is denoted by $B(t)$.
The \emph{binding element} is a~transition $t$ together with a valuation $b(t)$
of all the free variables related to $t$.
We~denote it by $(t,b)$, for $t\in T$ and $b\in B(t)$.

\begin{definition}[\cite{CPN}]{\rm
\label{def:CPN}
A (non-hierarchical) \emph{coloured Petri net} is a nine-tuple
$CPN=(P,T,D,\Sigma,V,C,G,E,I)$, where:
\begin{itemize}
\item $P$ and $T$ are finite, disjoint sets of \emph{places} and \emph{transitions};
\item $D\subseteq P\times T\cup T\times P$ is a set of \emph{directed arcs};
\item $\Sigma$ is a finite set of non-empty \emph{colour sets};
\item $V$ is a finite set of \emph{typed variables} such that $Type[V]\in\Sigma$ for all $v\in V$;
\item $C:P\rightarrow\Sigma$ is a \emph{colour set function} that assigns colour sets to places;
\item $G:T\rightarrow \mathit{EXPR}_V$ is a \emph{guard function} that assigns a guard to each transition $t$ such that $\mathit{Type}[G(t)]=Bool$;
\item $E:D\rightarrow \mathit{EXPR}_V$ is an \emph{arc expression function} that assigns an arc expression to each arc $d\in D$ such that $\mathit{Type}[E(d)]=\Nat^{C(p)}$, where $p$ is the place connected with the arc $d$;
\item $I:P\rightarrow \mathit{EXPR}_\emptyset$ is an \emph{initialisation function} that assigns an initialisation expression to take each place $p$ such that $\mathit{Type}[I(p)]=\Nat^{C(p)}$.
\end{itemize}}
\end{definition}

A \emph{marking} $M$ in coloured Petri nets is a function which assigns
a set of tokens $M(p)\in 2^{C(p)}$ to each $p\in P$.
An initial marking is denoted by $M_0$ and defined for each $p\in P$ as follows:
$M_0(p) = I(p)\langle\rangle$.

A binding element $(t,b)$ is \emph{enabled} at a marking $M$ if $G(t)\langle b\rangle$ is true and
at each place $p\in P$ there are enough tokens in $M$ to fulfil
the evaluation of the arc expression function $E(p,t)\langle b\rangle$.
The resulting marking is obtained by removing the tokens given by $E(p,t)\langle b\rangle$ from $M(p)$
and adding those given by $E(t,p)\langle b\rangle$ for each $p\in P$.

We define the \emph{enabledness} of transition in CPN as follows:
a transition $t \in T$ is \emph{enabled} at $M$ and its execution
leads to marking $M'$ (denoted $M[t\rangle M'$) if there exists a binding $b \in B(t)$,
such that the binding element $(t, b)$ is enabled at $M$.

\section{Generation of CPN from RPN}
\label{sec:trans_bt}

In this section we describe how to create CPN corresponding to a given RPN. The process is divided into two steps: in the first
we present how to generate CPN only for the structure of RPN and forward execution semantic, without implemented reversing semantics (Section~\ref{sec:str} and Section~\ref{sec:trans_co}). In the second the reversing semantics are added to CPN in a~form of additional transitions and arcs
(Section~\ref{sec:rev}).

\subsection{Generation of CPN - the structure and forward executions}
\label{sec:str}

We design the transformation of RPN $N_R=(P_R, T_R, F_R, A_R, B_R)$
to a~new equivalent CPN $C_R=(P_C,T_C,D_C,\Sigma_C,V_C,C_C,G_C,E_C,I_C)$ as
follows.

\medskip
The set of places is $P_C = P_R \cup P_{THP} \cup P_{CHP}$, where:
\begin{itemize}
\item $P_R$ is a set of places from the original RPN $N_R$,
\item $P_{THP} = \lbrace h_i \mid t_i \in T_R\rbrace$ is a set of \emph{transitions history places} (one new place for every transition from the original net),
\item $P_{CHP} = \lbrace h_{ij} \mid t_i, t_j \in T_R, i < j\rbrace$ is a set of \emph{connection history places} (one new place for every pair of transitions from the original net).
\end{itemize}

The set of transitions of the net $C_R$ is the same as in the RPN, namely $T_C = T_R$.
The set of variables $V_C$ should contain all elements necessary to describe each input token of a transition.

\medskip
New arcs have to be added to
$C_R$ to connect newly added places. Each transition $t_i$ is connected with its
history place $h_i$\footnote{Whenever the denotation $h_i$ is used without explanation,
we assume this is a~transition history place for transition $t_i \in T_R$.}
and all its connection history places ($h_{ij}$ or $h_{ji}$, depending on the order of $i$ and $j$,
where $j$ is a~number of transition, different from $i$)
in both directions. Hence:
\begin{align*}
D_C =\; & Domain(F_R) \cup \lbrace (t_i, h_i) \mid {t_i \in T_R} \rbrace
		\cup \lbrace (h_i, t_i) \mid {t_i \in T_R} \rbrace \\
	& \cup \lbrace (t_i, h_{ij}) \mid {t_i \in T_R}, i < j \rbrace
		\cup \lbrace (t_i, h_{ji}) \mid {t_i \in T_R}, j < i \rbrace \\
	& \cup \lbrace (h_{ij}, t_i) \mid {t_i \in T_R}, i < j \rbrace
		\cup \lbrace (h_{ji}, t_i) \mid {t_i \in T_R}, j < i \rbrace.
\end{align*}

The set of colours $\Sigma_C$ contains:
\begin{itemize}
\item $\mathit{Base} = A_R$;
\item $\mathit{Bond} = B_R$;
\item $\mathit{Bases}$ (subsets of $\mathit{Base}$ - in CPN Tools represented as lists);
\item $\mathit{Bonds}$ (subsets of $\mathit{Bond}$ - in CPN Tools represented as lists);
\item $\mathit{Molecule} = \mathit{Bases} \times \mathit{Bonds}$ -- molecules, as in a biochemical system, are considered to be a~set of bases or atoms with the corresponding bonds between them;
\item $\mathit{HIST} = \lbrace (n, i, j) \mid i, j, n \in \mathbb{N} \rbrace$ (local history for a pair of transitions) and
\item $\mathit{boundInt}$ -- bounded natural numbers belonging to $\Nat_b$ (the bound is equal to $\#T_R \cdot 2$).
\end{itemize}

The colour function $C_C$ assigns:
\begin{itemize}
\item to every place ${p \in P_R}$ -- a $molecule$ colour;
\item to every connection history place $h_{ij} \in P_{CHP}$ -- $\mathit{boundInt}$ colour,
which is a bounded integer number which describes how many times transitions from the pair $t_i, t_j$ were executed;
\item to every transition history place $h_i \in P_{THP}$ -- $\mathit{HIST}$ colour \footnote{If a triple $(n, j, i)$ is present in place $h_i$, it means
that transition $t_i$ occurred at the $n^{th}$ position in a sequence of executions of transitions $t_i$ and $t_j$.}.
\end{itemize}

The guard function $G_C$ has to be equivalent to the labels of input arcs defined in the RPN $P_R$. Consequently, if
$a \in F_R(p, t_i)$ ($\beta \in F_R(p, t_i)$, respectively) for a transition $t_i$ and its input place $p$, then
$G_C(t_i)$ should contain a condition, assuring that the binding of an
input token for place $p$ contains $a$ ($\beta$,~respectively).

The arc expression function $E_C$ for arcs between transitions $t_i \in T_R$ and places
$p_i \in P_R$ should be analogous to $F_R(t_i, p_i)$. If $t_i$ only transfers tokens
then $E_C(t_i, p_i)$ should be a union of bonds and bases of all inputs for $t_i$.
If $t_i$
creates a bond $\beta$, then $E_C(t_i, p_i)$ should be an union of bonds and bases of all inputs
for $t_i$, together with the newly created bond $\beta$.

Places $h_{ij}$ and $h_i$ control the history of a transition $t_i$,
(here, without lost of generality, we can assume that $i < j$). Let
$history_{ij} \in V_C$ represents the value obtained from place $h_{ij}$ by $t_i$.
The arc expression function is defined as:
${E_C(h_{ij}, t_i) = history_{ij}}$, $E_C(t_i, h_{ij}) = history_{ij} + 1$.
Hence, the current value of the connection history place
$h_{ij}$ denotes the next history value for the pair of transitions $t_i$ and $t_j$.

For the transition history place $h_i$, the following arc expressions should be assigned:
$E_C(h_i, t_i) = list_i$, where $list_i$ is a list of triples, which describes the previous history
of the transition $t_i$ and
$E_C(h_i, t_i) = list_i
\cup \lbrace (history_{ij}, j, i) \mid t_j \in T_R, M(h_{ij}) = history_{ij} \rbrace$. Understanding the history mechanism is crucial for understanding
the transformation idea.
Since we assumed that
each path in RPN is finite, values in places $h_i$ and $h_{ij}$ are bounded by the definition.

The initialization function $I_C$ may be understood as an assignment of the initial marking
to places. From now on by \emph{markings} we understand the value of tokens in places (according to definitions of CPNs the concept of marking is more complex, hence this statement).
For places $p \in P_C$ originated from $P_R$ we assign the same initial marking (the same set of bases and bonds)
as in the original net.
For $h_i \in P_{THP}$ we have $I_C(h_i) = \es$ (empty list), while for
$h_{ij} \in P_{CHP}$ we have $I_C(h_{ij}) = 0$.

\medskip
Let us now define the correspondence between states of RPN and markings of the corresponding CPN.
First, recall that in acyclic RPNs transitions may be fired at most once (because every base or bond appears only once
in any marking), but in Section~\ref{sec.cycles} we discuss
transitions which may be executed twice, hence here we would already present result with this assumption. Recall that:
\begin{itemize}
\item a state in RPN is a pair $\langle M_R, H_R \rangle$, where
$M_R:P \rightarrow 2^{A \cup B}$ and $H_R: T \rightarrow \Nat_2$,
\item a state in CPN generated from RPN can be considered as a marking $M:P \rightarrow 2^{C_C(p)}$.
\end{itemize}

\begin{remark}
A marking in RPN is a set of bases and bonds, while a marking in CPN for places originating from RPN
is a set of pairs of the form $(bases, bonds)$. Of course, one representation can
be easily transformed to the equivalent one.
\end{remark}

In what follows we describe how to generate a marking $M$ of CPN on the basis of $\langle M_R, H_R \rangle$ of RPN
or how to obtain the original state $\langle M_R, H_R \rangle$ of RPN from $M$ of CPN. Such marking $M$ and
state $\langle M_R, H_R \rangle$ are called \emph{corresponding}.

\medskip
A marking $M$ of CPN generated from $\langle M_R, H_R \rangle$ of RPN is a function as follows:
\begin{itemize}
\item $M(p) \in 2^{A \cup B}$ for $p \in P_R$ - if a base belongs to $M_R(p)$ then it belongs to
the first coordinate of $M(p)$, and if a bond belongs to $M_R(p)$ then it belongs to
the second coordinate of $M(p)$,
\item $M(h_i) \in 2^{(\Nat \times \Nat \times \Nat)}$, for $h_i \in P_{THP}$ where \\
$M(h_i) = \bigcup_{k \in H_R(t_i); t_i, t_j \in T; i \neq j} (\#\{h \in H_R(t_i) \cup H_R(t_j); h < k \} + 1, j, i)$,
\item $M(h_{ij}) \in 2^{\Nat}$, for $h_{ij} \in P_{CHP}$ where $M(h_{ij}) = \#H_R(t_i) + \#H_R(t_j)$.
\end{itemize}

On the other hand, having a marking $M$ of CPN indicates the original state $\langle M_R, H_R \rangle$ of RPN
in the following way:
\begin{itemize}
\item $M_R(p) = \bigcup_{(x,y) \in M(p)} (x \cup y)$, for $p \in P$;
\item As mentioned before, we assume that transitions in \RPN can be executed  at most twice.
For that reason, for any $t_i \in T$ we can distinguish three cases of the content of transition history
place $h_i$:
\begin{enumerate}
\item $\#M(h_i) = 0$ (i.e., transition $t_i$ has not been executed yet); \\
in that case $H_R(t_i) = \es$.
\item $\#M(h_i) = \#T_R-1$ (i.e., $t_i$ has been executed once); \\
in that case
$H_R(t_i) = \{1 + \Sigma_{(k, j,i) \in M(h_i)} (k-1)\}$
\item $\#M(h_i) = 2 \cdot (\#T_R-1)$ (i.e., $t_i$ has been executed twice); \\
in that case let us define sets: \\
$\mathit{maxHist}(h_i) = \{(k_m, j, i) \in M(h_i) | k_m = max\{k|(k,j,i) \in M(h_i)\}\}$ - for each pair of
 selected transition $t_i$ and every other transition $t_j$
 we choose only the triple with the maximal value of $k$.
\\
$\mathit{minHist}(h_i) = M(h_i) \setminus \mathit{maxHist}(h_i)$\\
Considering above formulas the history of any transition $t_i$,
 which was fired twice would contain:\\
$H_R(t_i) = \{1 + \Sigma_{(k, j,i) \in minHist(h_i)} (k-1)$,\\
$ 1 + \Sigma_{(k_j, j,i) \in maxHist(h_i)} (k_j-1)-
\#\{(k_g, j, i) \in M(h_i) | k_g < k_j \land (k_j, j, i)\in M(h_i)\} * \frac{\#T_R - 2}{\#T_R - 1}\}$.
\end{enumerate}
\end{itemize}

The case, when a transition has not been executed yet is trivial.
In other cases we have to calculate the index of the transition in the sequence of executions. However, this number in CPN is not given
directly, but is scattered among history places, or more precisely among factors $k$ in triples stored in those history places.
When transition $t$ is executed for the first time, a triple is added to its history place for every other transition $t'\in T_R$.
Factor $k$ in such a triple means that the discussed transition was executed as $k$-th when you consider only transitions from the set $\{t,t'\}$. Hence, if the transition $t$ is $k$-th - it means that the other transition ($t'$) has been executed $k-1$ times earlier or in other words, there have been $k-1$ executions before the discussed transition fired. To calculate the index of the transition in the whole sequence, all executions before the considered one have to be added plus one for the considered execution. It gives the formula presented above (case $2$).
Similar situation occurs when the transition is executed for the second time (case 3).
However, in that case there are two groups of triples in the history place. A part of them is related to the first execution (those
belong to $minHist(h_i)$) and can be used to calculate the index in the sequence of executions related to the first execution of the transition.
Others are related to the second execution - those belong to $maxHist(h_i)$. However, we cannot
simply add $k-1$ executions before the considered one as it was described in case $2$, because values $k$ in triples from
$maxHist(h_i)$ contain also information about the first execution (after the execution, counters are not reset). Hence, we need to subtract that redundant information.

\begin{theorem}{\rm
Consider RPN $R=(P_R, T_R, F_R, A_R, B_R)$ and the corresponding CPN \\
 $C=(P_C, T_C, D_C, \Sigma_C, V_C, C_C, G_C, E_C, I_C)$  constructed
according to the above transformation.
Let $\state{ M_R}{H_R}$ be a reachable state in $R$ and $M$
be a corresponding marking in $C$.
Then a~transition $t_i$ is enabled at $M_R$ in $P_R$ if and only if it is enabled at $M$ in $C$.
Moreover, if $\state{ M_R}{H_R} \trans{t_i}\state{ M_R'}{H'_R}$ and $M[t_i\rangle M'$
then $\state{ M_R'}{H'_R}$ corresponds to $M'$.
\label{theoremMain}}
\end{theorem}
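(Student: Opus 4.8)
The plan is to prove the two assertions — equivalence of enabledness and preservation of the correspondence relation under firing — by unwinding the definitions of the translation in parallel with Definitions~\ref{def:forward-enabled} and~\ref{def:for-eff} on the RPN side and the enabledness/firing rule for CPNs recalled in Section~\ref{sec:CPN}. Since the construction of $C$ splits the places of $C$ into $P_R$, the transition history places $P_{THP}$, and the connection history places $P_{CHP}$, I would analyse each of the three groups separately, showing that the history places never obstruct firing and that their update exactly mirrors the bookkeeping performed by $H_R$.

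First I would handle enabledness. For the direction ``enabled in $R$ $\Rightarrow$ enabled in $C$'': given that $t_i$ is forward enabled at $\state{M_R}{H_R}$, I exhibit a binding $b\in B(t_i)$. The binding assigns to each input-token variable of $t_i$ (for input place $p\in{}^\bullet t_i$) the molecule $\connected(a,M_R(p))$ where $a\in A_R\cap F_R(p,t_i)$, and to the history variables $history_{ij}$ the current integer content $M(h_{ij})=\#H_R(t_i)+\#H_R(t_j)$. I then check: (i) the guard $G_C(t_i)\langle b\rangle$ holds, which is precisely clauses~1 and~2 of Definition~\ref{def:forward-enabled} restated over the $(bases,bonds)$ encoding (here I use the correspondence $M_R(p)=\bigcup_{(x,y)\in M(p)}(x\cup y)$ and well-formedness clause~4 to know that the bond data on the arc is consistent with what sits in $M(p)$); (ii) there are enough tokens at every $p\in{}^\bullet t_i$ — immediate from clause~1 and the fact that $\connected(a,M_R(p))\subseteq M_R(p)$; (iii) the history places $h_i$ and all $h_{ij}$ always carry exactly one token (a list, resp.\ a bounded integer) by the invariant maintained from $I_C$, so they impose no extra constraint. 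Clause~3 of Definition~\ref{def:forward-enabled} corresponds to the part of $G_C$ coming from the bond appearing on an outgoing arc that pre-exists; I would note this is where the guard and the arc expression $E_C(t_i,p_i)$ interact. The converse direction ``enabled in $C$ $\Rightarrow$ enabled in $R$'' is the same equivalences read backwards: from a witnessing binding $b$, the guard forces clauses~1--3, and the token-availability forces that the required bases/bonds sit in the input places.

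Next I would prove the firing clause. Assume $\state{M_R}{H_R}\trans{t_i}\state{M_R'}{H_R'}$ and $M[t_i\rangle M'$ via the binding $b$ identified above (enabledness already gives such a $b$; I should note the relevant part of $M'$ does not depend on the choice of $b$, since the molecule moved out of an input place $p$ is forced to be $\connected(a,M_R(p))$). I then verify the three bullet points defining ``$M'$ corresponds to $\state{M_R'}{H_R'}$'': (a) for $p\in P_R$, the CPN firing rule removes $E_C(p,t_i)\langle b\rangle=\connected(a,M_R(p))$ from input places and adds $E_C(t_i,p_i)\langle b\rangle$ (the union of all input molecules, plus the new bond $\beta$ if $t_i$ creates one) to the unique output place, which matches line~1 and line~2 of Definition~\ref{def:for-eff} after translating between the set-of-bases-and-bonds and the $(bases,bonds)$ representations; (b) for $h_{ij}\in P_{CHP}$, the arc expressions give $M'(h_{ij})=M(h_{ij})+1=\#H_R(t_i)+\#H_R(t_j)+1$, which is exactly $\#H_R'(t_i)+\#H_R'(t_j)$ since firing $t_i$ adds one element to $H_R(t_i)$; (c) for $h_i\in P_{THP}$, $E_C(t_i,h_i)$ appends the triples $\{(history_{ij},j,i)\mid t_j\in T_R\}=\{(\#H_R(t_i)+\#H_R(t_j)+1,\,j,\,i)\}$, and I have to check this equals $M(h_i)$ augmented by the triples prescribed for the new history element $k=\max\{k'\mid k'\in H_R(t'')\}+1$ in the definition of the corresponding marking, i.e.\ $(\#\{h\in H_R'(t_i)\cup H_R'(t_j): h<k\}+1,\,j,\,i)$. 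Reconciling these two expressions for the new triple — showing $\#H_R(t_i)+\#H_R(t_j)+1=\#\{h\in H_R'(t_i)\cup H_R'(t_j): h<k\}+1$ — is the crux: it holds because $H_R(t_i)$ and $H_R(t_j)$ are disjoint sets of indices all strictly below the freshly assigned $k$, so their union has exactly $\#H_R(t_i)+\#H_R(t_j)$ elements below $k$.

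I expect the main obstacle to be the history-place accounting in part~(c), especially making the argument robust under the paper's standing assumption that a transition may fire twice (the three-case decoding via $minHist$/$maxHist$). One must check that after a first firing of $t_i$ the counters in the $h_{ij}$ are not reset, so that the triple added on a second firing carries the cumulative value, and that the decoding formula in case~3 — with its correction term $\#\{(k_g,j,i)\in M(h_i): k_g<k_j\}\cdot\frac{\#T_R-2}{\#T_R-1}$ — indeed inverts the encoding; this is the one genuinely fiddly computation and I would isolate it as a small lemma relating $M(h_i)$ before and after a firing. Everything on the $P_R$ side, by contrast, is a routine translation between the two token representations using well-formedness (clauses~1--4) and the definition of $\connected(\cdot,\cdot)$, and the enabledness equivalence reduces cleanly to a clause-by-clause comparison of Definition~\ref{def:forward-enabled} with the guard $G_C$ and the arc-expression-based token count.
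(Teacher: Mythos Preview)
Your proposal is correct and takes essentially the same approach as the paper's own proof: both dispose of the enabledness equivalence and the $P_R$-place contents quickly by appeal to the correspondence of molecule contents, and then concentrate on the history places as the only nontrivial part. The paper organises the history computation slightly differently---it splits explicitly into the cases $H_R(t_i)=\emptyset$ and $H_R(t_i)=\{l_1\}$ (first versus second firing of $t_i$) and verifies the \emph{decoding} formulas for $H_R'(t_i)$ from $M'(h_i)$ in each case (invoking $\mathit{maxHist}$ and an inclusion--exclusion correction in the second), whereas you verify the \emph{encoding} side directly via the disjointness identity $\#(H_R(t_i)\cup H_R(t_j))=\#H_R(t_i)+\#H_R(t_j)$ and push the double-firing decoding check into a separate lemma; the two routes are equivalent.
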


\begin{proof}
Let $\state{ M_R}{H_R}$ be a reachable state in $R$ and $M$
be the corresponding marking in $C$.
The enabledness of transitions (in the forward direction) depends only on the molecules located in its
input places (which in coloured Petri net $C$ is expressed by the guard function).
The correspondence between RPN and CPN described above, assumes that the content
of such places is equivalent. Hence, transition $t_i$ is enabled at $\state{ M_R}{H_R}$ if and only if
$t_i$ is enabled at $M$.

Let $\state{ M_R}{H_R} \trans{t_i}\state{ M_R'}{H'_R}$ and $M[t_i\rangle M'$. We need to show that
$\state{ M_R'}{H'_R}$ corresponds to~$M'$. According to the definition of the effect in RPN and
the transformation procedure, we know that the contents of places belonging to $P_R$ before
and after the firing of $t_i$ in $R$ and $C$ are equivalent.
We only need to focus on histories $H_R$ and $H'_R$ in $R$ and markings of transition and connection history
places in $C$.
We know that after execution of $t_i$ in RPN, the new element, indicating the number of transitions executed in
the current computation, is added to its history. It is the only difference between $H_R$ and $H'_R$.
On the other hand, the difference between $M$ and $M'$ considering only history places is as follows:
\begin{itemize}
\item All tokens (i.e., natural numbers) in connection history places related to $t_i$ are increased by 1,
but those places
are not considered during computation of the corresponding state in RPN.
\item New elements, in the number of $\#T_R - 1$, are added to the transition history place $h_i$ of $t_i$.
\end{itemize}
Let us notice, that due to our future assumption (see Remark~\ref{remarkTwo}), for a~given transition
$t_i$ the set $H_R(t_i)$ can be an empty set, a singleton or a~two-elements set. In the last case,
the transition cannot be forward executed any more. Hence, we consider two cases:
\begin{enumerate}
\item $H_R(t_i) = \emptyset$, then after the execution of $t_i$ we have
$H'_R(t_i) = \{l_1\}$, for some natural number $l_1$ indicating the index of the transition in the current
computation, hence $l_1 - 1$ equals to the number of transitions executed before $t_i$ in the sequence.
On the other hand, based on the fact that $M$ corresponds to state $\state{ M_R}{H_R}$, we have
$M(h_i) = \emptyset$. After the execution of $t_i$ at $M$ in $C$, we add $\#T_R - 1$ triples
$(k,j,i)$ to $h_i$ to obtain $M'(h_i)$. From every such triple, based on $k$, we can deduce whether some transition
$t_j$ has been executed
before $t_i$. We only need to count such transitions and add $1$ to obtain $l_1$.
Strictly speaking, we use the formula: $H'_R(t_i) = \{1 + \Sigma_{(k, j,i) \in M'(h_i)} (k-1)\}$.
\item  $H_R(t_i) = \{l_1\}$ for $l_1 \in \Nat \setminus \{0\}$, then after the execution of $t_i$ we have
$H'_R(t_i) = \{l_1, l_2\}, l_2 > l_1$, for some natural number $l_2$ indicating the second index of the transition in the current
computation. On the other hand, based on the fact that $M$ corresponds to state $\state{ M_R}{H_R}$, we have
$M(h_i)$ consisting of $\#T_R - 1$ elements. After the execution of $t_i$ at $M$ in $C$, we add new triples
$(k,j,i)$ to $h_i$, obtaining $M'(h_i) = M(h_i) \cup X$, where $\#X = \#T_R - 1$.
Similarly to the previous case, from every triple $(k,j,i)$, based on $k$, we can deduce whether some transition
$t_j$ has been executed before $t_i$. However, there are two triples in $M'(h_i)$
for every transition $t_j$. We do not want to double the information, hence based on the
inclusion-exclusion principle, we use the following formula to compute $l_2$:
$1 + \Sigma_{(k_j, j,i) \in maxHist(h_i)} (k_j-1)-
\#\{(k_g, j, i) \in M'(h_i) \mid k_g < k_j \land (k_j, j, i)\in M'(h_i)\} \cdot \frac{\#T_R - 2}{\#T_R - 1}$,
where $\mathit{maxHist}(h_i)$ is defined above.
\end{enumerate}

\vspace*{-7mm}
\end{proof}

\subsection{Generation of CPN -- modification for causal-order reversing}
\label{sec:trans_co}
The construction described to this point requires a refinement for
causal-order reversing.
Based on the  structural dependence approach,
as described in Section~\ref{ssec:causal}, two transitions are said to be dependent if an input place of one of them is an output place of the other (see Definition~\ref{d:dep}).  In  order to implement this form of dependence we need a~different approach to define the set $P_C$ than the one used for backtracking.
We still create the same transition and connection history places like described in Section~\ref{sec:trans_bt}.
However, during the evaluation of the guard function for reversing,
we consider instead of the $P_{CHP}$ its subset, called $P_{SHP}$ defined:
$P_{SHP} =  \lbrace h_{ij} \mid t_i, t_j \in T_R; t_i, t_j \in Dep; i < j \rbrace$. All~assignments connected to places in
$P_{SHP}$ stay the same like in $P_{CHP}$.

\subsection{Generation of CPN -- adding reverses}
\label{sec:rev}

The coloured Petri net $C_R$ described in Section~\ref{sec:str}
is prepared for reversing.
This can be achieved by adding supplementary reversal transitions.
The new CPN
$C_R\textprime=(P_C,T_C\textprime,D_C\textprime,\Sigma_C,V_C\textprime,C_C,G_C\textprime$, $E_C\textprime,I_C)$ is
based on $C_R$ (which is CPN corresponding to RPN $P_R)$. The set of places $P_C$ and colours $\Sigma_C$,
the function $C_C$ and the initialization expression $I_C$
are the same as in $C_R$.

For every transition in $C_R$, a new reversed transition $tr$ is added to the net.
Hence, $T_C\textprime = T_C \cup \lbrace tr_i \mid t_i \in T_R \rbrace$.
The execution
of $tr_i$ is equivalent to a rollback of an execution of $t_i$
corresponding to~$tr_i$.

Each transition $tr_i$ is connected to the same set of places
as $t_i \in T_C$ but in opposite directions. Moreover $tr_i$ is connected with
all transition history places and connection history places related to it, namely
$D_C\textprime = D_C \cup \lbrace (tr_i, p) \mid (p, t_i) \in D_C \rbrace
\cup \lbrace (p, tr_i) \mid (t_i, p) \in D_C \rbrace$
$\cup \lbrace (tr_i, h_j) \mid {t_i, t_j \in T_R} \rbrace$
$\cup \lbrace (h_j, tr_i) \mid {t_i, t_j \in T_R} \rbrace \cup
	\lbrace (tr_i, h_{ij}) \mid {t_i \in T_R}, i < j \rbrace
		\cup \lbrace (tr_i, h_{ji}) \mid {t_i \in T_R}, j < i \rbrace
	 \cup \lbrace (h_{ij}, tr_i) \mid {t_i \in T_R}, i < j \rbrace
		\cup \lbrace (h_{ji}, tr_i) \mid {t_i \in T_R}, j < i \rbrace$.

The set of variables $V_C\textprime$ should contain all elements
necessary to describe the input tokens of all transitions (including reversed transitions).

The guard function $G_C\textprime$ has to be modified to take into account the newly
created reversing transitions.
Hence, guards contain conditions checking whether the input places of $tr_i$, which are
originally from $P_R$, contain bases transferred by $t_i$
(for transition which only transfer molecules) or bonds created by $t_i$ (for transition which creates a bond).
Moreover, the conditions used in the guard function for transitions $tr_i$,
in the case of backtracking,
have to guarantee that the transition $t_i$ was the last one executed in a system.
On the other hand, in the case of causal-order reversing, the guard has to assure that no transition dependent
on $t_i$ was executed after $t_i$.
Let $t_i \in T_C$ be the transition to be reversed and $tr_i \in T_C\textprime$ -- its  reverse.
To define the guard function for $tr_i$, we have to look through the content of
the transition history place $h_i$ and the connection history places $h_{ij}$ and $h_{ji}$ for $i \neq j$. For transparency in the following paragraph, having fixed $i$, we use the denotation $h_{ij}$,
regardless of the actual order of $i$ and $j$
(i.e., $h_{ij}:=h_{ij}$ if $i<j$ and $h_{ij}:=h_{ji}$ if $i>j$).

\medskip
For every pair $t_i, t_j$ ($i$ -- fixed, $i \neq j$) we proceed as follows:
\begin{enumerate}
\item Let $history_{ij}$ be the value obtained from place $h_{ij}$.
\item Let $list_i$ be the value obtained from place $h_i$.
\item We check whether $list_i$ contains the element $(history_{ij}, j, i)$. If~`yes',
it means that transition $t_i$ was the most  recently executed one from the pair $t_i, t_j$.
\item If the answer for at least one $t_j$ is `no' then the guard function returns value $\mathit{false}$, otherwise it~returns $\mathit{true}$.
\end{enumerate}

Due to the different definitions of backtracking and
causal-order reversing, in the procedure described above, we use different sets of connection history places
$\{h_{ij} | i\mathrm{-fixed}; i \neq j\}$.
For backtracking we use the whole $P_{CHP}$, for casual reversing set $P_{SHP}$ defined in Section~\ref{sec:trans_co}.

The arc expression function $E_C\textprime$ for arcs between transitions $tr_i$ and places
$p_i \in P_R$ describes the reversal of the execution of $t_i$. Hence,
if $t_i$ just transfers tokens, then the arc description
contains only the transfer of molecules. If $t_i$ creates a bond $\beta = a\bond b$
then during the execution of $tr_i$ the bond has to be broken.
This may result in the production of two separate molecules, one of them including $a$ while the other one including $b$.
Hence, $E_C(p_i, tr_i)$ contains only the transfer of a molecule, and
$E_C(tr_i, p_j)$ contains the transfer of a molecule obtained after breaking
bond $\beta = a\!-\!b$, which includes $a$ ($b$ respectively) if $a$ ($b$ respectively)
has been transferred from place $p_j$ during execution of $t_i$.
If the molecule is still a connected component after breaking the bond, then $E_C(tr_i, p_j)$ indicates the transfer of the whole molecule back to the place from which it was taken by $t_i$
(this situation is possible only when $t_i$ has one input place).
All of these computations can be done using the CPN semantics in combination with the use of functions,
allowed in CPN ML and graph operations.

The arc expression function for the pair $(h_i,tr_i)$ (i.e., $E_C(h_i, tr_i)$)
 allows collecting execution history of the $t_i$, presented as
a list of triples.
For arcs in the opposite direction the arc expression function returns the list without
 elements $(n, j, i)$ for $t_j \in T_R$ describing the last execution of~$t_i$.

For the other transition history places $h_j$, $i \neq j$, the arc expression
$E_C\textprime\textprime(h_{j}, tr_i)$ includes the transfer of a token from $h_{j}$.
The expression $E_C\textprime\textprime(tr_i, h_{j})$
contains the modification of the token value.
We consider the triple $(n,j,i)$ from $h_i$ and selected triples $(m, i,j)$ from $h_j$,
all those triples determined by
the last execution of transition $t_i$ (the one to be reversed).
If $m$ is larger than $n$, the arc expression
$E_C\textprime\textprime(tr_i, h_{j})$ exchanges the value of triple $(m, i, j)$
by $(m-1, i, j)$. No matter whether the value of the token
is modified or not, it needs to be transferred back to the place $h_j$.
Note that, in the case of backtracking, such  modification never happens (because in backtracking
we can reverse only the recently executed transition, hence $m < n$).

\begin{figure}[!ht]
\vspace*{-1mm}
\begin{center}
\includegraphics[scale=0.51]{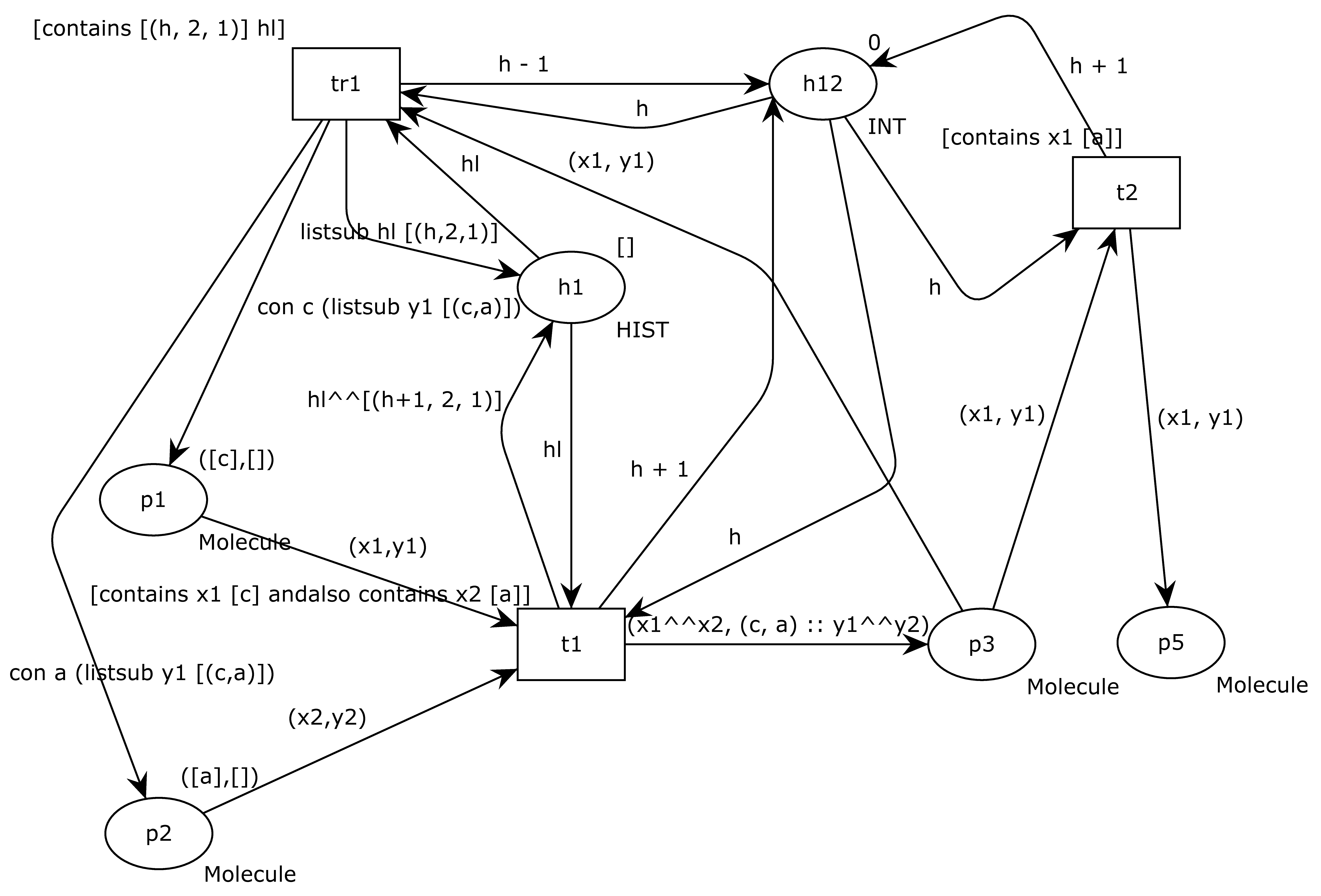}\vspace*{-1mm}
\caption{Example of CPN generated for RPN. }
\label{figure2}
\end{center}\vspace*{-5mm}
\end{figure}

\begin{example}
Figure~\ref{figure2} depicts an example of CPN generated from RPN. For legibility place $h_2$ and transition $tr_2$ is omitted.
Transition $t_1$ creates the bond $a\bond c$, transition $t_2$ transports base $a$. The operations described
in the transformation are implemented as functions in CPN Tools semantics: \smallskip
\texttt{\scriptsize{
\\ \textbf{fun} nei [] x = [x] | nei ((y,z)::xs) x = if y=x orelse z=x then [y,z] ^^ (nei xs y) ^^ (nei xs z) else nei (if ia xs x then xs ^^ [(y,z)]) else []) x
\\ \textbf{fun} cbs x [] = []| cbs x ((y,z)::yr) = if x=y orelse x=z then [(y,z)] ^^ cbs x yr else cbs x yr
\\ \textbf{fun} cb [] [] = [] | cb (x::xr) [] = [] | cb [] l = [] | cb (x::xr) l = cbs x l ^^ cb xr l
\\ \textbf{fun} con x l = (remdupl (nei l x), remdupl (cb (nei l x) l)).
}}\smallskip \\
More elaborated examples (in form of high-resolution images and CPN Tools files) are available here:
\href{https://www-users.mat.umk.pl/~leii/cycles/}{\textcolor{blue}{https://www-users.mat.umk.pl/$\sim$leii/cycles/}}.
\end{example}

When it comes to the connection history places $h_{ij}$
($h_{ji}$ respectively), the
values in those places are decreased by one during the execution of $tr_i$, and
this operation has to be indicated by the arc expression function.

Finally we are ready to prove the correctness of the transformation for reversed executions.
\newpage
\begin{theorem}{\rm
Consider RPN $R=(P_R, T_R, F_R, A_R, B_R)$ and the associated CPN\\
 $C=(P_C, T_C, D_C, \Sigma_C, V_C, C_C, G_C, E_C, I_C)$  constructed on the basis of $P$ according to the transformation described in Section~\ref{sec:rev}.
Let $\state{ M_R'}{H'_R}$ be a reachable state in $R$ and $M'$
be a corresponding marking in $C$.
Then a~transition $t_i$ is enabled in the reverse direction (according to backtracking or causal order semantics)
at $M'_R$ in $P_R$ if and only if $tr_i$ is enabled at $M'$ in $C$.
Moreover, if $\state{ M_R'}{H'_R} {\ensuremath{\stackrel{t_i}{\rightsquigarrow}}}\state{ M_R}{H_R}$ and
$M'[tr_i\rangle M$ then $\state{ M_R}{H_R}$ corresponds to $M$.}
\end{theorem}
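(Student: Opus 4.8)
The plan is to follow the structure of the proof of Theorem~\ref{theoremMain}. I would fix a reachable state $\state{M'_R}{H'_R}$ of $R$ together with its corresponding marking $M'$ of $C$, and treat the two sub-claims separately: (i) $t_i$ is enabled in the reverse direction at $\state{M'_R}{H'_R}$ iff $tr_i$ is enabled at $M'$, and (ii) if $\state{M'_R}{H'_R}\rtrans{t_i}\state{M_R}{H_R}$ and $M'[tr_i\rangle M$ then $\state{M_R}{H_R}$ corresponds to $M$. Backtracking and causal order will be handled uniformly: the construction of Section~\ref{sec:rev} differs only in whether the guard of $tr_i$ inspects all connection-history places of $P_{CHP}$ (backtracking) or only those of $P_{SHP}$ (causal order, Section~\ref{sec:trans_co}), and this matches precisely the difference between the universal quantification over $T_R$ in Definition~\ref{def:bt-enabled} and the quantification over $Dep$-related transitions in Definition~\ref{co-enabled}.

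For (i), I would first dispose of the case $M'(h_i)=\es$: by the correspondence this is equivalent to $H'_R(t_i)=\es$, so $t_i$ is reverse-enabled in neither semantics, while on the CPN side the guard of $tr_i$ fails, since the empty list read from $h_i$ contains no triple $(history_{ij},j,i)$. When $M'(h_i)\neq\es$, I would use the two correspondence identities $M'(h_{ij})=\#H'_R(t_i)+\#H'_R(t_j)$ and the description of $M'(h_i)$ as a set of triples to show that $list_i$ contains $(history_{ij},j,i)$, where $history_{ij}$ is the token currently in $h_{ij}$, precisely when $H'_R(t_j)=\es$ or $\max H'_R(t_i)\geq\max H'_R(t_j)$ (because that triple is present iff some $k\in H'_R(t_i)$ equals $\max(H'_R(t_i)\cup H'_R(t_j))$). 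Conjoining this over all $t_j$ (resp.\ over all $t_j$ with $(t_i,t_j)\in Dep$) reproduces exactly the $bt$-enabledness (resp.\ $co$-enabledness) condition, giving (i).

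For (ii), assuming $\state{M'_R}{H'_R}\rtrans{t_i}\state{M_R}{H_R}$ and $M'[tr_i\rangle M$, I would verify the correspondence one class of places at a time. On the places of $P_R$: the arc expressions of $tr_i$ were built to delete from $t_i$'s output place the molecule carrying the base(s) $t_i$ placed there and, when $t_i$ created a bond $\beta=a\bond b$, to return to $t_i$'s input place(s) the connected component(s) left after deleting $\beta$; rewriting molecules as sets of bases and bonds, this is exactly the first two clauses of Definition~\ref{def:bt-eff}, with the CPN-ML function \texttt{con} realizing $\connected(a,M(q)\setminus\effect{t_i})$. On $h_i$: reversing $t_i$ in $R$ removes $\max H'_R(t_i)$, and in $C$ the arc $(tr_i,h_i)$ drops the $\#T_R-1$ triples that record $t_i$'s last execution; if $H'_R(t_i)$ was a singleton both become empty (matching $H_R(t_i)=\es$), and if $H'_R(t_i)=\{l_1,l_2\}$ with $l_1<l_2$ the RPN keeps $\{l_1\}$ while the CPN keeps $\mathit{minHist}(h_i)$, the triples from $t_i$'s first execution, which by case~2 of the correspondence encodes $\{l_1\}$. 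On $h_{ij}$ and the places $h_j$ with $j\neq i$: in causal reversing Definition~\ref{def:co-eff} renumbers every other history, sending an index $k'>k$ to $k'-1$ with $k=\max H'_R(t_i)$; the CPN decrements each $h_{ij}$ by one and replaces a triple $(m,i,j)\in h_j$ by $(m-1,i,j)$ exactly when $m$ exceeds the rank $n$ of $k$ recorded in $h_i$, which holds iff the corresponding index of $t_j$ lies above $k$, so the two renumberings agree, cardinalities are preserved (giving $M(h_{ij})=\#H_R(t_i)+\#H_R(t_j)$), and the formula for $M(h_j)$ is restored -- the triples over $l\neq i$ being left alone, as they must be since shifting all histories in lockstep above $k$ leaves every other relative rank intact. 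For backtracking $t_i$ is necessarily the last transition fired, so $m<n$ always, no renumbering occurs, and the argument collapses to Definition~\ref{def:bt-eff}.

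The hard part will not be this bookkeeping but the two genuinely computational checks underneath it: first, that the CPN-ML arc expression for $tr_i$ really evaluates to $\connected(a,M(q)\setminus\effect{t_i})$, splitting the molecule into one or two connected components exactly as the RPN bond-breaking semantics dictates (including the sub-case where it stays connected and is returned whole to the single input place); and second, that in the twice-fired situation needed later for cycles, deleting $\mathit{maxHist}(h_i)$ together with the decrements in the other history places is consistent with the non-obvious case-3 correspondence formula, down to the correction term $\frac{\#T_R-2}{\#T_R-1}$.
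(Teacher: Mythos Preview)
Your proposal is correct and follows essentially the same approach as the paper: mirror the proof of Theorem~\ref{theoremMain}, dispose of the $P_R$-places by construction, and handle the history side by a case split on whether $t_i$ has been executed once or twice, with the causal case additionally requiring the decrement of triples in the other $h_j$. Your write-up is in fact considerably more detailed than the paper's own proof---in particular you spell out the enabledness equivalence (i) and the $h_{ij}$/$h_j$ bookkeeping, both of which the paper leaves largely implicit.
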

\begin{proof}
The proof is similar to the proof of Theorem~\ref{theoremMain}.
Due to the assumption that $t_i$ is enabled in the reverse direction in $R$ and $M'$ corresponds to the state
$\state{ M_R'}{H'_R}$, the content of places belonging to $P_R$ in $R$ and $C$ is equivalent.
Hence, we can focus on histories $H'_R$ and $H_R$ in $R$ and markings of transition and connection history
places in $C$.
As in the previous theorem, we consider two cases: \smallskip\\
-- Transition $t_i$ has been executed once, and $\#M'(h_i) = \#T_R - 1$.
After reversing it $H_R(t_i) = \emptyset$ in $R$, and
in $C$ we have to remove all the elements from its history place $h_i$, hence $M(h_i) = \emptyset$. \smallskip\\
-- Transition $t_i$ has been executed twice, $H'_R(t_i) = \{l_1, l_2\}$, $l_1 < l_2$,
and $\#M'(h_i) = 2 \cdot (\#T_R - 1)$. After reversing $t_i$, we remove the greater index for the
history obtaining $H_R(t_i) = \{l_1\}$. In $C$ we have two triples of the form $(k,j,i)$ in $M'(h_i)$
for every transition $t_j$. After reversing we have to remove the elements belonging to
$maxHist(h_i) = \{(k_m, j, i) \in M'(h_i) | k_m = max\{k|(k,j,i) \in M'(h_i)\}\}$ which
are related to the second execution. Clearly the remaining triples are related to the first execution
(i.e., the remaining $l_1$ in $H_R(t_i)$). Similarly, as in the previous proof, we can make use of formulas
defined in Section~\ref{sec:trans_bt} for computing the exact values. Moreover, we also need to
decrease the first coordinate in the triples in other transition history places, according to the Definition~\ref{def:co-eff}.
\end{proof}

\section{Cycles}
\label{sec.cycles}

In this Section we discuss possibilities of reversing of cycles in RPNs and corresponding CPNs.
We~now proceed to define cycles in reversing Petri nets.

\begin{definition}{\rm
A \emph{cycle} of reversible Petri net $(P, T, F, A, B)$ is a~sequence $x_0\ldots x_m$, with
$x_i\in{P\cup T}$ for $0\leq i\leq m$, such that
$F(x_i,x_{i+1})\neq\emptyset$ for $0\leq i<m - 1$, and $x_0=x_m$.
A cycle is \emph{simple} when no elements (except $x_0=x_m$) occurs more than once in it.
}\end{definition}

For the purpose of this paper, we assume that every cycles starts with a place (i.e., $x_0 \in P$).

\subsection{Infinite and finite cycles}

To distinguish cycles, which could be executed infinite and finite number of times, we define two types of transitions: those that transfer tokens and those that create bonds.

\begin{definition}{\rm
Let $(P, T, F, A, B)$ be a reversing Petri net and $t \in T$.
Transition $t$ is called:
\begin{itemize}
\itemsep=0.95pt
\item a \emph{transferring} transition if
$\bigcup_{p \in {}^\bullet t}{F(p,t)} = \bigcup_{p \in t^\bullet}{F(t, p)}$;
\item a \emph{bond-creating} transition if $t$ is not a transferring transition, i.e., there exists $p \in t^\bullet$ such that $\beta \in F(t, p)$ for some $\beta \in B$ and $\beta \not\in \bigcup_{p \in {}^\bullet t}{F(p,t)}$.
\end{itemize}
}\end{definition}

\begin{remark}
Note that, according to previous assumption (Definition~\ref{def:well-formed-RPN}),
the set $ t^\bullet$ consists of one element only.
\end{remark}

To create infinite cycles in RPNs only transferring transitions could be used. According to assumptions from Section~\ref{sec.RPN}, one token of each type can be preset in RPN, hence bond-creating
transitions can be fired only once. Even if this restriction would be relaxed, to obtain infinite execution of a~bond-creating transition,
initial marking of at least one of its input places would have to be infinite. This condition goes against the
definition of Petri nets in general. Hence,
a cycle, executed infinite number of times, can be created only by transferring transitions.

Unfortunately, infinite cycles in RPNs would cause problems with
infinite values of history, both in RPNs and CPNs corresponding to them. One of our goals was to eliminate infinite numbers from this model to avoid Turing power complexity, and - as a consequence - undecidability of decision problems. This is the first reason why this type of cycles is undesirable.

Moreover, and maybe even more importantly, when
we consider biological motivation, cycles created only by
transferring transitions are unnatural. No organism would waste energy on endless transportation of molecules. Substances are transported
only in order to finally perform some reactions or operations on them. Those reactions or operations are the goals of the transportation.

Because of the above reasons, we would focus on cycles created not only
by transferring transitions, but also at least one bond-creating transition.

\begin{definition}{\rm
Let $(P, T, F, A, B)$ be a reversing Petri net. It is called \emph{trans-acyclic} if it
does not contain any cycle consisting of transferring transitions only.
\label{def:trans-ac}
}\end{definition}

\begin{figure}[!b]
\begin{center}
\includegraphics[scale=0.65]{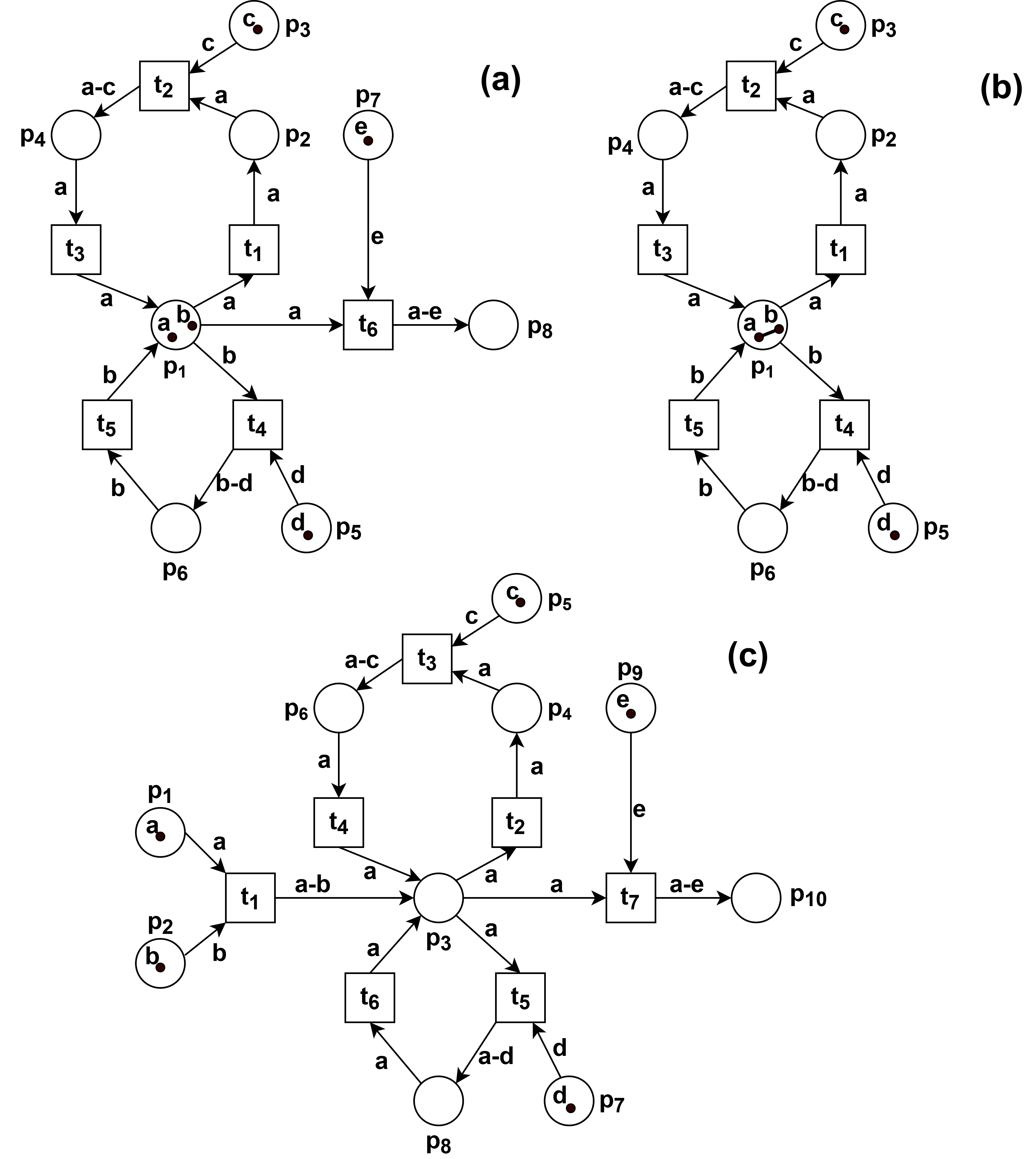}
\caption{Examples of trans-acyclic RPNs.}
\label{figure3}
\end{center}\vspace*{-3mm}
\end{figure}

\begin{remark}
Note that every cycle in trans-acyclic RPN has to contain at least one bond-creating transition.
Also, as assumed,
bonds cannot be recreated.
Consequently, every bond-creating transition, even in cycles, can be executed at most once.
Transferring transition can be executed at most two times -- it can be executed
twice only if it occurs in a cycle before a~bond-creating transition. Transferring transition following
a bond-creating transition in a cycle
can be executed only once.
Therefore, the state space of a trans-acyclic RPN is finite.
\label{remarkTwo}
\end{remark}

\begin{example}
For better understanding of the issue, look at Figure~\ref{figure3}a.
Transitions can fire in a sequence: $t_1t_2t_3$ - all of them
would be executed once. Then transition $t_1$ could be fired for the
second time (also transitions $t_4$ and $t_6$ are enabled). However,
after the second execution of $t_1$ no other transition is enabled
because place $p_3$ is empty and $t_2$ cannot fire.
\end{example}
Having in mind that every transition can be executed at most twice, notice that
for a given $RPN =(P,T,F,A,B)$ we have: if $k \in H(t)$ for some $t \in T$ then
$k \leq 2\cdot\#(T)$. For this reason any value of any history belongs to the set
$\Nat_b = \{0, 1, ..., 2\cdot\#(T)\}$.

From now on, only trans-acyclic RPNs are considered.

\subsection{Reversing of cycles in causal semantic}

Now, we consider how reversing of cycles is performed in various reversing semantics.

In \emph{out of causal} method (which has been only briefly mentioned in this paper)
every transition, which was executed, can be reversed, and cycles would not change that. From the point of view of cycles, this is not
very entrancing situation and that is the reason why \emph{out of causal} semantic is not considered in this paper.
In \emph{backtracking} only the recently fired transition could be reversed
and this is also not very intriguing when we consider trans-acyclic RPNs.

The most interesting case of cycles reversing in trans-acyclic RPNs is \emph{causal} reversing.
We can say that it lies between other two approaches.
Here, by
adopting different definition of dependence we can control,
to some point, the order in which transitions could be reversed.

According to the definition of structural dependence presented in Section~\ref{ssec:causal} (see
Definition~\ref{d:dep}) two transitions are  structurally dependent when at least one output place
of the first transition is also the input place of the second or vice versa.
However, the structural approach to dependencies is somewhat
strict. Please consider Figure~\ref{figure3}a. The dependence relation in this case is as follows: $Dep_{str} = \{(t_1, t_2), (t_1, t_3), (t_1, t_5), (t_2, t_3), (t_3, t_6), (t_3, t_4), (t_4, t_5), (t_5, t_6) \}$ (for clarity we do not specify the symmetrical elements).
After sequence of executions: $t_1t_2t_3t_4t_5$ only transition
$t_5$ can be reversed. Transition $t_3$ cannot be reversed because it
is dependent on $t_4$ (they have common place $p_1$), hence $t_4$ has to be
reversed first. However, when one consider changes of markings, it is
easy to notice that both cycles seem to be independent because
it is not important which bond ($a\bond c$ or $b\bond d$) is created first. Moreover, transitions $t_3$ and
$t_5$ even do not use the same tokens.

Presented example shows
that to distinguish dependent and independent transitions, instead of
using the structural dependence, more suitable approach
is to consider marking and tokens used by transitions. Hence, let us define
the marking-oriented dependence (this type of dependence
is investigated in details in~\cite{CyclesRPN}).

\begin{definition}{\rm
Consider \RPN $(P,T,F,A,B)$.
Transitions $t_1$ and  $t_2$ from $T$ are \emph{marking-oriented dependent} if there exist:
a place $p\in P$, a base $a\in A$ and
a reachable state $\langle M, H\rangle$ (such that $H(t_1)\neq \es$, and $H(t_2)\neq \es$) for which,
having $C=\connected(a,M(p))$, the following holds:
$C\cap \effects{t_1} \cap \effects{t_2} \neq \emptyset$.
}\end{definition}

According to the above definition, two transitions are marking-oriented dependent if they manipulate the same token, i.e.,
both components produced or transferred by those transitions contain the same base.
Notice, that it is not required that the token appears on the label of the arcs
in the net.
Since location of tokens is a dynamic aspect of Petri net, this type of dependence
is determined by the initial state of RPN and can be described only by observing the current marking.

Let us look again at Figure~\ref{figure3}a. According to marking-oriented definition of dependence transitions
$t_3$ and $t_5$ are independent because $t_3$ manipulates tokens
$a$ and $c$, when $t_5$ manipulates tokens $b$ and $d$.
Transitions $t_2$ and $t_6$ in Figure~\ref{figure3}c are depended
because they both transfer token $a$, however, this situation can be seen by looking at the net structure, it is not necessary to test the individual marking.
On the other hand, $t_3$ and $t_5$ at Figure~\ref{figure3}b are marking-oriented dependent because they both transfer components
containing token $a$. It is impossible to discover this dependence only by observing the structure of the net, without looking at its marking.

One can say that the marking-oriented dependence is finer.
However, implementation of this dependence requires large modifications
in the model, especially in a way of generation of CPNs from RPNs
and functions related to those CPNs. Even with structural dependence,
arcs and guards expressions are quite complex, with the marking-oriented
one they would be even more difficult to implement. Moreover,
marking-oriented dependence has other feature, which in some
situations may be considered as disadvantage. The sets of
dependent and independent transitions in one RPN may differ
between executions.
All of this together is the reason why we would like to find a different definition of dependence,
which is less strict than the structural one and would allow
some flexibility with reversing of cycles in trans-acyclic RPNs.
At the same time, to avoid large modifications of the RPN semantic
and the CPN generation, we need to obtain flexibility based on
the structure of RPN, not on the dynamics of the net.
It results in the co-dependence relation.

\begin{definition}{\rm
		Let $P_R=(P, T, F, A, B)$ be a reversing Petri net, and $t_1, t_2 \in T$. We say that $t_1, t_2$ are
		\emph{co-backward-conflicted} (or in \emph{co-backward-conflict relation}, or \emph{co-dependent}),
		denoted by $(t_1, t_2) \in Dep_{co}$ (and $(t_2, t_1) \in Dep_{co}$, as the relation is symmetric),
		if there exists a~place $p \in P$ where $p \in {t_1^\bullet} \cap {t_2^\bullet}$ and
		there exists a cycle in $P_R$ such that $p$ belongs to the cycle, and, moreover: at least one
		of the transitions $t_1$ and $t_2$ does not belong to any simple cycle.
		Additionally, we assume that $(t,t)\in Dep_{co}$ for every $t\in T$.
		
		We say that two transitions $t_1,t_2\in T$ are \emph{co-independent} when they are not in the
		co-backward-conflict relation, hence the co-independence is defined as follows: $Ind_{co}=T^2\setminus Dep_{co}$.
		
		Moreover, we define $Ind_{co}|_T = \{t\in T\,|\,\exists_{t'\in T} (t,t')\in Ind_{co}\}$
		as the set of
		all transitions for which there exist at least one independent transition in $T$.
		}\label{d:co-dep}
\end{definition}

With the assumption of co-dependence transitions $t_3$ and $t_5$ in
Figure~\ref{figure3}a are independent. Hence, after sequence of
executions: $t_1t_2t_3t_4t_5$ both $t_3$ and $t_5$ can be reversed in
any other. However, transitions $t_4$ and $t_6$ in
Figure~\ref{figure3}c are independent. They both are co-dependent
on $t_1$, and $t_1$ can be reversed only when both of them are undone earlier, but $t_4$ and $t_6$ can be rollbacked in any order. It
 cause some unwanted consequences explained further in this section.

\medskip
The huge advantage of co-dependence relation is the possibility of implementing it
quite easily in CPNs generated for RPNs - only minor changes are necessary.
First, similarly to structural dependence (Definition~\ref{d:dep}),
let us introduce a~set $P_{BHP} \subseteq P_{CHP}$, called a set of \emph{backward-conflicted history places}, as follows
$P_{BHP} =  \lbrace h_{ij} \mid t_i, t_j \in T_R; t_i, t_j \in Dep_{co}; i < j \rbrace$.
Then, we need to adjust the procedure described in Section~\ref{sec:rev}. Recall that in the case of structural dependence, while checking whether an action could be reversed, we examine the content of connection history places belonging to $P_{SHP}$,
in order to find the value $history_{ij}$. Now, for co-independent transitions we do not need to explore all those places.
Depending on the type of transition $t_i\in T$ we consider two possibilities:
\begin{itemize}
\item for $t_i\in Ind_{co}|_T$ we only need to explore places belonging to $P_{BHP}$
\item for $t_i\notin Ind_{co}|_T$ we take into account the following set of connection history places
$P_{CHP}\setminus \lbrace h_{ij} \mid t_i, t_j \in T_R; t_i, t_j \in Dep, t_j\in Ind_{co}|_T\rbrace$.
\end{itemize}

\begin{figure}[ht]
\vspace*{-2mm}
\begin{center}
\includegraphics[scale=0.69]{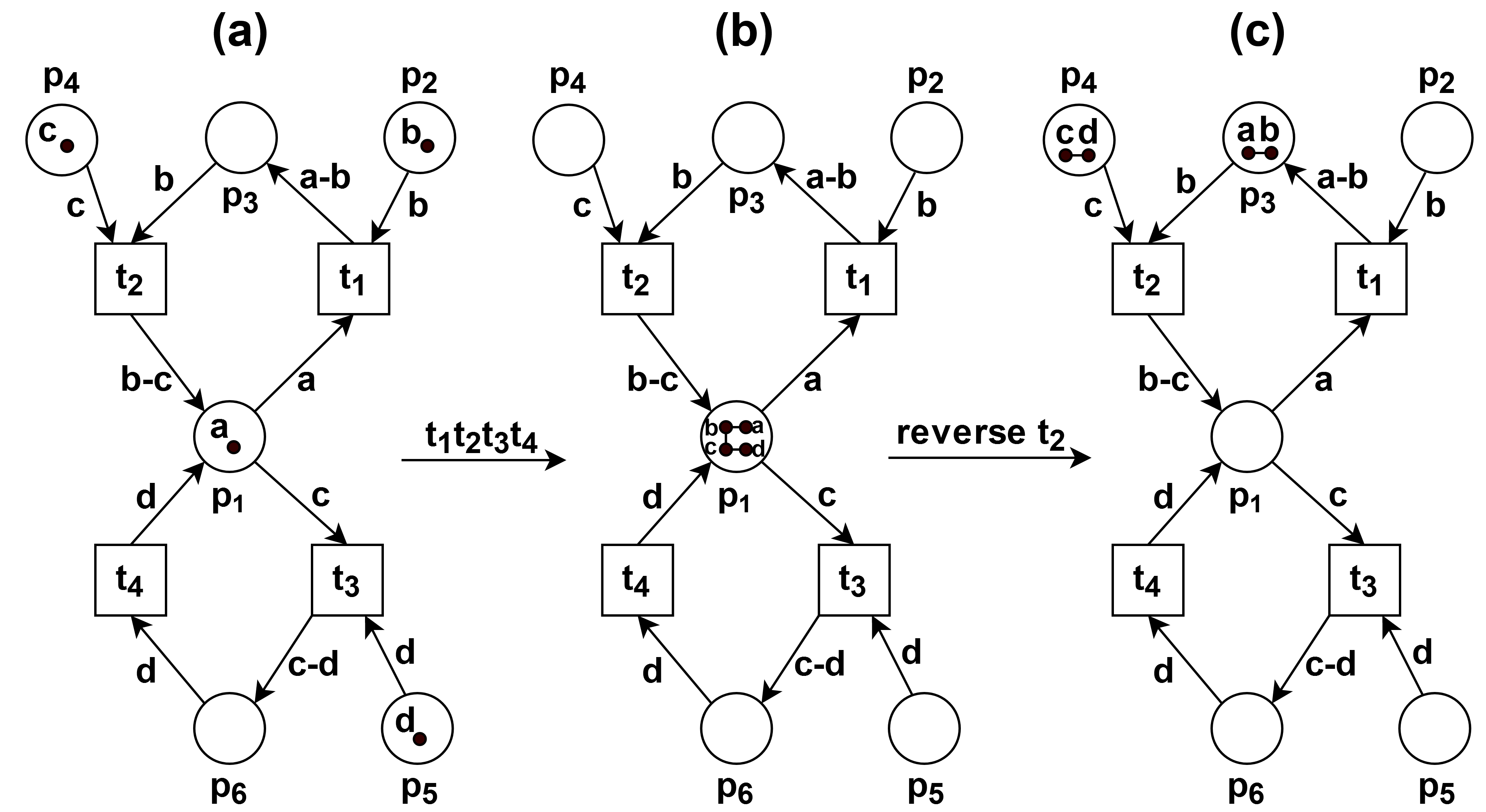}\vspace{-1mm}
\caption{A net which cannot reach (by reversing) its initial marking after reversing co-independent transitions $t_2$ and $t_4$.
The connected component (molecule) in place $p_1$ in part (b)
is $a\bond b\bond c \bond d$.}
\label{figure4}
\end{center}\vspace*{-3mm}
\end{figure}

Unfortunately, the example presented in Figure~\ref{figure4}
shows that in the co-backward-conflict relation approach
a system cannot always be brought back to the initial state.

Let us look at the example depicted in Figure~\ref{figure4}.
Part (a) shows
the initial marking, while part (b) the marking after the execution of transitions sequence
$t_1t_2t_3 t_4$ (in place $p_1$ we have connected component:
$a\bond b\bond c \bond d$). Note that transitions $t_2$ and $t_4$ are co-independent,
hence they can be reversed in any order. Let us reverse transition $t_2$ as the first one (see
Figure~\ref{figure4}c).
In the marking depicted in (c) base $d$ is still bonded with base $c$
in place $p_4$. Please notice that we cannot move
base $d$ from place $p_4$ by reversing -- we can say that base $d$ is \textit{stuck} in place $p_4$. At the same time, presence of $d$ in place
$p_1$ is
required to reverse transition $t_4$, and only then $t_3$ could be
rollbacked.
Therefore we cannot reach the initial marking only by reversing of transitions.
To obtain the initial marking in the presented situation (Figure~\ref{figure4}c), we should execute transition $t_2$ in forward direction  --
it would mean that we actually ``undo the
reversing'', and then reverse $t_4$ first.

This example shows that the order of reversing co-independent
transitions is crucial, which is against the idea of causal reversing
and, unfortunately, we need to be careful with this definition of dependence.

\section{Conclusions and future work}
\label{sec.cons}

This paper is an improved version of \cite{BGMPPP}, enriched with
discussion related to cycles.
Here, in comparison to \cite{BGMPPP}, we focus more on backtracking and causal reversing semantics, because they are more interesting in the
context of cycles. Moreover, we change the form of history in RPNs,
from the single integer to a set of numbers. Furthermore, formal
proofs of generation of CPNs from RPNs are presented in this paper.

In the second part of the paper we discuss the possibility of introduction of
cycles to RPNs, and thus their introduction to CPNs generated from
RPNs. It turned out that the most interesting case is reversing of
cycles in the causal semantic, where possibility of reversing depends on
definition of dependence. Three definitions of dependence have been
presented: structural, marking-oriented and co-backward conflict.
The structural one is most \textit{strict}, reversing
of cycles is the same as in backtracking. With the marking-oriented
one, in some cases cycles can be reversed in different order than
they were executed in
forward direction. Unfortunately, this dependence is based on dynamic behaviour
and its introduction to the current version of CPNs generation
algorithm is not possible without large modifications. We tried
to find a new type of dependence, which would allow more flexibility
in cycles reversing but would be based on the structure of RPNs.
It resulted in co-backward conflict dependence. Unfortunately,
we discovered that this dependence lead to unwanted behaviour,
and should be used with caution. We would like to find a different,
structure based, definition of dependence, which would allow
"proper" causal reversing of cycles. It rises a question, if is it even
possible? We would study it more in the future.

In this paper, we applied the limitation of the number of bases of a~given type to one element.
However, we believe that the presented results would be valid even if this limitation is lifted (multitoken semantics).
This would rise a need of token identification, but the overall behaviour of the net would remain unchanged.
Moreover, the extension of the formulas for the enumeration of indexes in histories, analogous to the present ones, would be needed.

As a general aim, we plan on implementing an algorithmic translation that transforms
RPNs to CPNs in an automated manner using the transformation techniques discussed in this
paper.
We also aim to explore how our framework applies in fields outside computer science, since the
expressive power and visual nature offered by Petri nets coupled with reversible computation has
the potential of providing an attractive setting for analysing systems (for instance in biology,
chemistry or hardware engineering).

\subsection*{Acknowledgements}
We are immensely grateful to \L ukasz Mikulski
for his valuable insights and suggestions during discussions of this work.
Furthermore, we would like to thank Anna Philippou and Kyriaki Psara for their ideas and working together on the preliminary, unpublished version of the paper.

\end{document}